\def\eqref#1{equation~\ref{#1}}
\def\1{\bm{1}}
\DeclareMathAlphabet{\mathsfit}{\encodingdefault}{\sfdefault}{m}{sl}
\SetMathAlphabet{\mathsfit}{bold}{\encodingdefault}{\sfdefault}{bx}{n}
\newcommand{\calS}{\mathcal{S}}
\algrenewcommand\algorithmicrequire{\textbf{Input:}}
\algrenewcommand\algorithmicensure{\textbf{Output:}}
\newtheorem{definition}{Definition}
\newtheorem{theorem}{Theorem}
\newtheorem*{theorem*}{Theorem}
\newtheorem{lemma}{Lemma}
\newtheorem{corollary}{Corollary}
\newtheorem{assumption}{Assumption}
\DeclarePairedDelimiterX{\inp}[2]{\langle}{\rangle}{#1, #2}
\DeclareMathOperator*{\lrnlocal}{LRN}
\newcommand{\norm}[1]{\left\lVert#1\right\rVert}
\newcommand{\order}[1]{\mathcal{O}\left(#1\right)}
\newcommand\Tstrut{\rule{0pt}{2.6ex}}         
\newcommand{\algname}{{FedBuff}\xspace}
\newcommand{\lrn}{{LR-Norm}\xspace}
\newcommand{\concurrency}{{concurrency}\xspace}
\newcommand{\asyncfl}{{AsyncFL}\xspace}
\newcommand{\syncfl}{{SyncFL}\xspace}
\newcommand{\secagg}{{SecAgg}\xspace}
\definecolor{babyblue}{rgb}{0.63, 0.79, 0.95}
\begin{document}

%

%
\runningauthor{J. Nguyen, K. Malik,  H. Zhan, A. Yousefpour, M. Rabbat, M. Malek, D. Huba}

\twocolumn[

\aistatstitle{Federated Learning with Buffered Asynchronous Aggregation}

\aistatsauthor{ John Nguyen \And Kshitiz Malik  \And  Hongyuan Zhan \And Ashkan Yousefpour}
\aistatsauthor{Mike Rabbat \And Mani Malek \And Dzmitry Huba }

\aistatsaddress{ Meta AI} ]

\begin{abstract}
Scalability and privacy are two critical concerns for cross-device federated learning (FL) systems. In this work, we identify that synchronous FL --- synchronized aggregation of client updates in FL --- cannot scale efficiently beyond a few hundred clients training in parallel. It leads to diminishing returns in model performance and training speed, analogous to large-batch training. On the other hand, asynchronous aggregation of client updates in FL (i.e., asynchronous FL) alleviates the scalability issue. However, aggregating individual client updates is incompatible with Secure Aggregation, which could result in an undesirable level of privacy for the system. To address these concerns, we propose a novel buffered asynchronous aggregation method, \algname, that is agnostic to the choice of optimizer, and combines the best properties of synchronous and asynchronous FL. We empirically demonstrate that \algname is 3.3$\times$ more efficient than synchronous FL and up to 2.5$\times$ more efficient than asynchronous FL, while being compatible with privacy-preserving technologies such as Secure Aggregation and differential privacy. We provide theoretical convergence guarantees in a smooth non-convex setting. Finally, we show that under differentially private training, \algname can outperform FedAvgM at low privacy settings and achieve the same utility for higher privacy settings.
\end{abstract}

\section{Introduction}
\label{sec:intro}
Federated Learning (FL) is a distributed learning paradigm that aims to train a shared model across participants while training data stays on the participant devices. In this work, we focus on cross-device FL where participants are edge devices (\cite{fl-survey}), and in particular, aim to address the following two challenges:



\textbf{Challenge 1: Scalability.} In large-scale cross-device FL settings, the number of clients can be in the millions, and only a small fraction of the client population may be available at any given time for training (\cite{fl_field_guide}). Additionally, client devices may have limited communication bandwidth and compute power. In these settings, an important parameter is \emph{\concurrency}: the number of clients training concurrently (i.e., \emph{clients-per-round} or \emph{cohort size}). 
There is a fundamental limitation when increasing concurrency in synchronous FL training: a diminishing return in the speed and quality of training. In this paper, we propose a novel buffered asynchronous aggregation optimization that makes it possible to train using significantly higher concurrency, improving the performance and efficiency of~FL.

\textbf{Challenge 2: Privacy.}
Inference attacks, methods trying to recover information from gradients, can expose sensitive information about the participating clients \citep{melis2019exploiting, geiping2020inverting}. Given this privacy concern, secure aggregation (SecAgg) (\cite{secagg-sgx, secagg}) and differential privacy (DP) (\cite{dpftrl, google-dp-fl}) provide protection against inference attacks \citep{lauren_alex, extraction}. Using SecAgg, an honest-but-curious server cannot see the individual client updates, while DP can protect clients’ data from observations based on the inputs and the output of the computation. With SecAgg, DP clipping and noise addition can be performed on server, providing a better privacy-utility trade-off. For many real-world cross-device FL applications, compatibility with such privacy enhancing technologies is vital.  




\textbf{Our proposal: \algname{}.} Motivated by these challenges, we propose and analyze \algname, a novel asynchronous federated optimization framework using buffered asynchronous aggregation. In \algname, clients train and communicate asynchronously with the server. Unlike other asynchronous methods, the server aggregates $K$ client updates in a secure buffer before performing a server update. This secure buffer can be implemented by using Trusted Execution Environments (TEEs) \citep{secagg-sgx, tee_ppfl}. 




\textbf{Contributions.}
We highlight the main contributions: 

    $\bullet$ We propose \algname{}, a novel asynchronous federated optimization framework with buffered asynchronous aggregation to achieve scalability and privacy against the \textbf{honest-but-curious} threat model through secure aggregation and differential privacy.
     
    $\bullet$ We provide a convergence analysis for \algname in the smooth non-convex setting. When clients take $Q$ local SGD steps, \algname requires $\order{1/(\epsilon^2 Q)}$ server iterations to reach $\epsilon$ accuracy (Section \ref{sec:convergence}).
    
    $\bullet$ Empirically, we show that \algname{} is up to 3.8$\times$ more efficient than competing synchronous FL algorithms,  even without penalizing synchronous FL algorithms for stragglers. We also demonstrate that \algname{} is up to 2.5$\times$ more efficient than the closest asynchronous FL algorithm in the literature, FedAsync \citep{fedasync}. Our extensive empirical evaluation finds that $K = 10$ is a good setting across benchmarks and does not require tuning. 

    $\bullet$ To the best of our knowledge, we are the first to propose an asynchronous federated optimization framework that is compatible with SecAgg and global user-level DP. Under differentially private training, \algname can outperform both synchronous FL with amplified DP-SGD and DP-FTRL (differentially private Follow-the-Regularized-Leader) at low privacy settings, and be competitive for high privacy settings. 


\begin{figure*}[t]
     \centering
     \begin{minipage}[t]{0.32\linewidth}
         \centering
         \includegraphics[width=\linewidth]{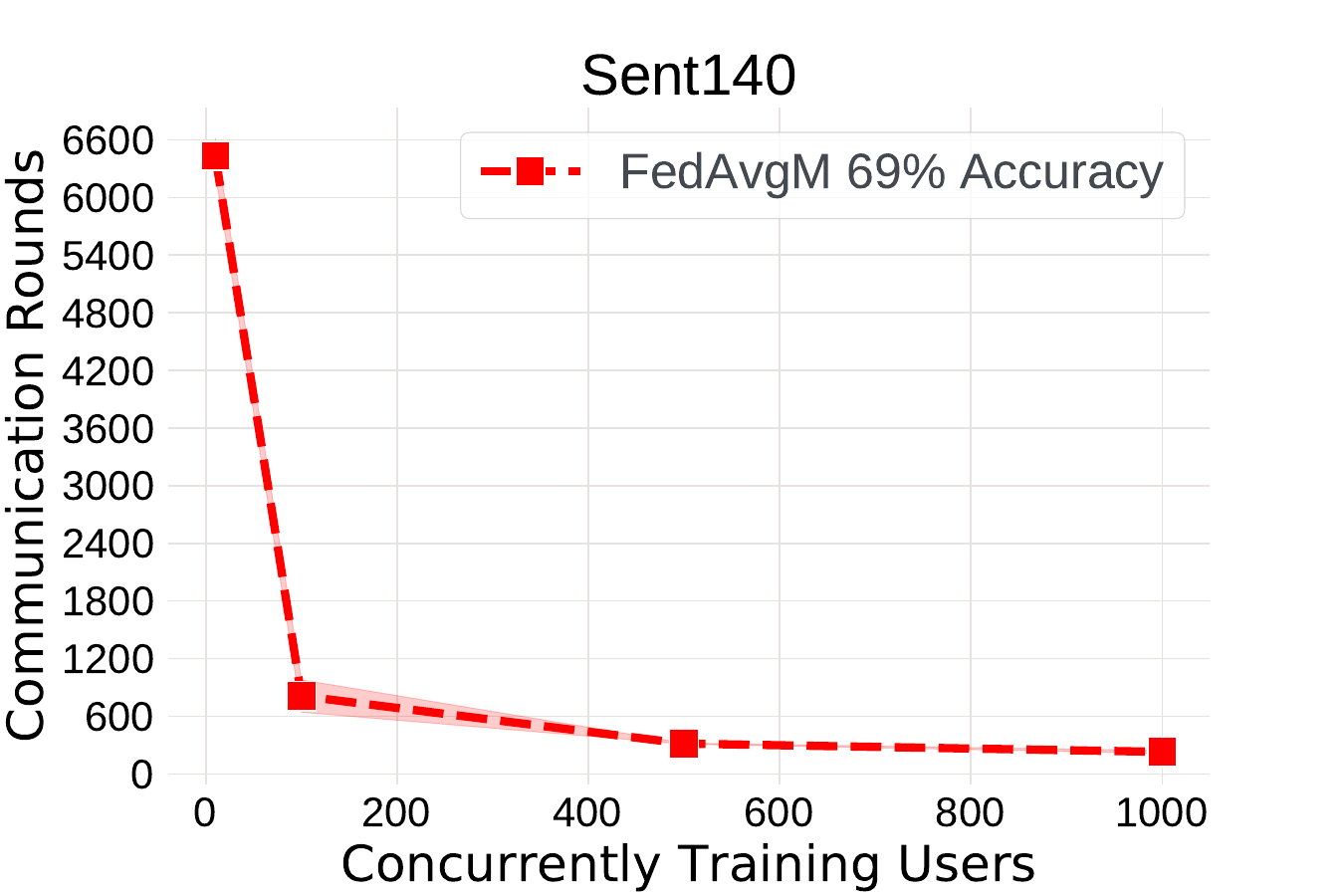}
     \end{minipage}
     \hspace{1em}
     \begin{minipage}[t]{0.32\linewidth}
        \centering
        \includegraphics[width=\linewidth]{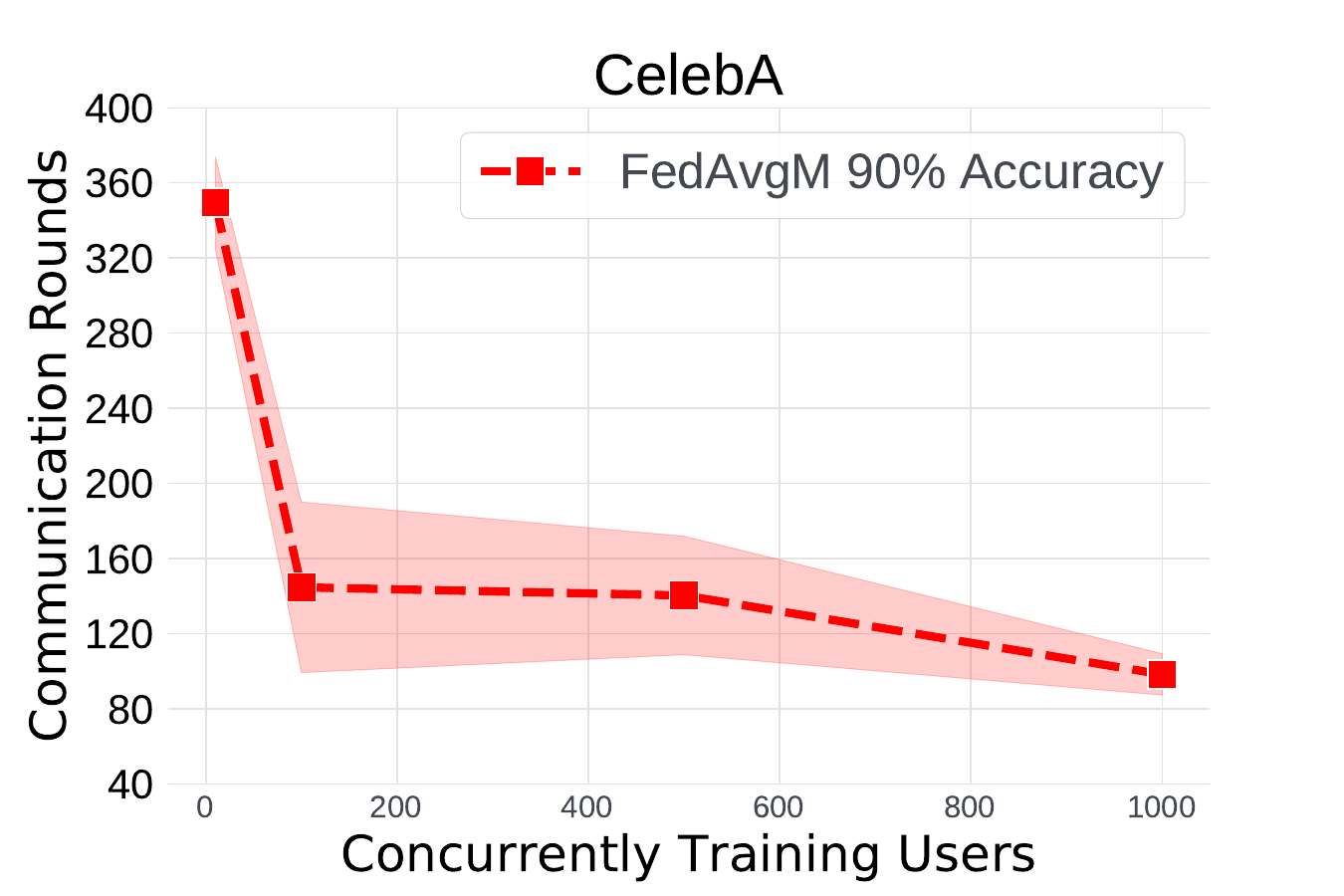}
     \end{minipage}
        \begin{minipage}[t]{0.32\linewidth}
        \centering
        \includegraphics[width=\linewidth]{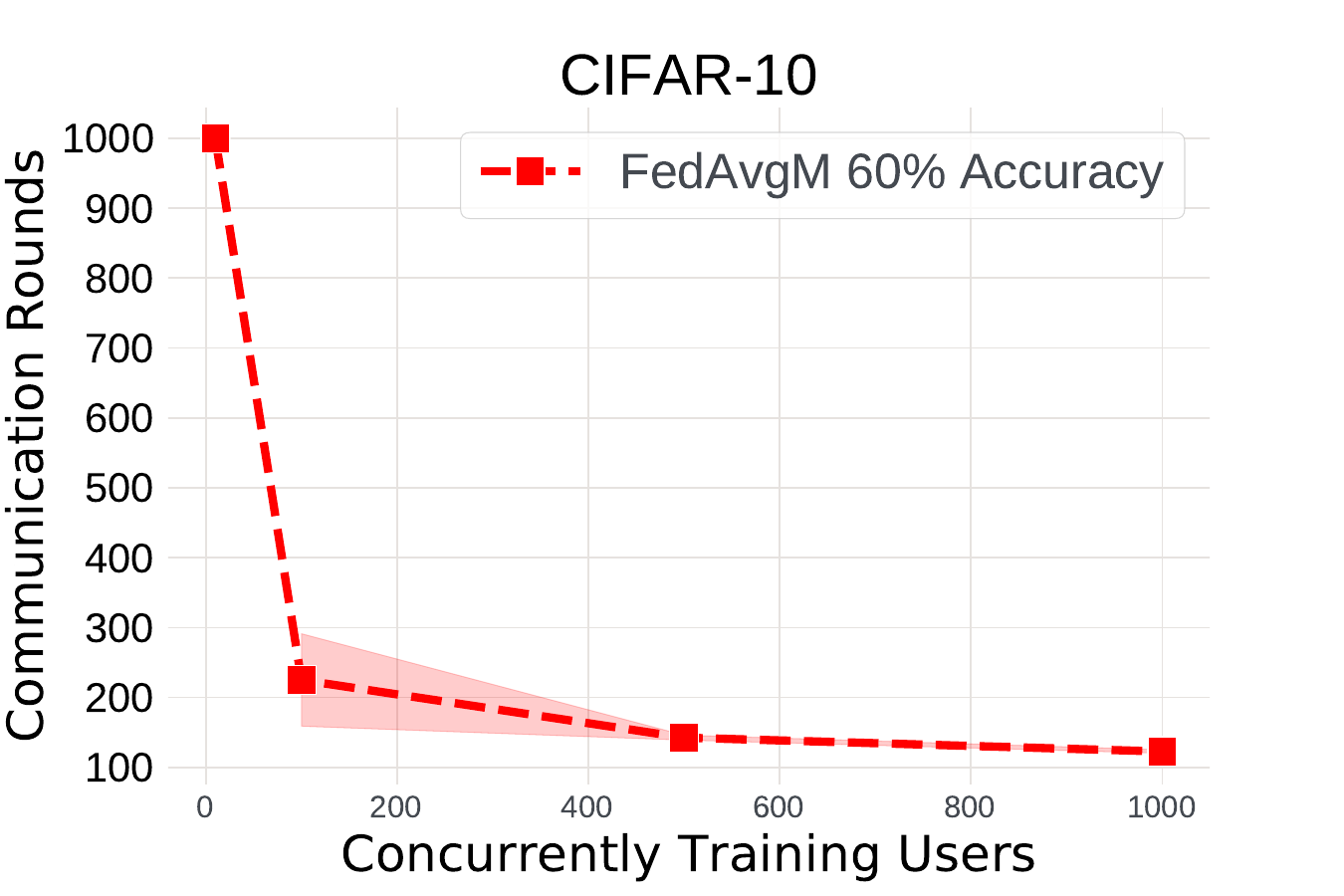}
     \end{minipage}
     \caption{The number of communication rounds to reach a target accuracy with varying levels of concurrency.
        SyncFL algorithms such as FedAvgM \citep{fedavgm} shows diminishing returns from
        increasing concurrency beyond 100. For example, increasing concurrency by 10x (100 –> 1000)
        decreases the number of communication rounds by less than 2x. This is analogous to large-batch
        training, where increasing the batch size eventually gives diminishing returns.}
     \label{fig:sent140_rounds}
\end{figure*}

\begin{figure}[t]
    \begin{minipage}[t]{\linewidth}
        \centering
	\includegraphics[width=\linewidth]{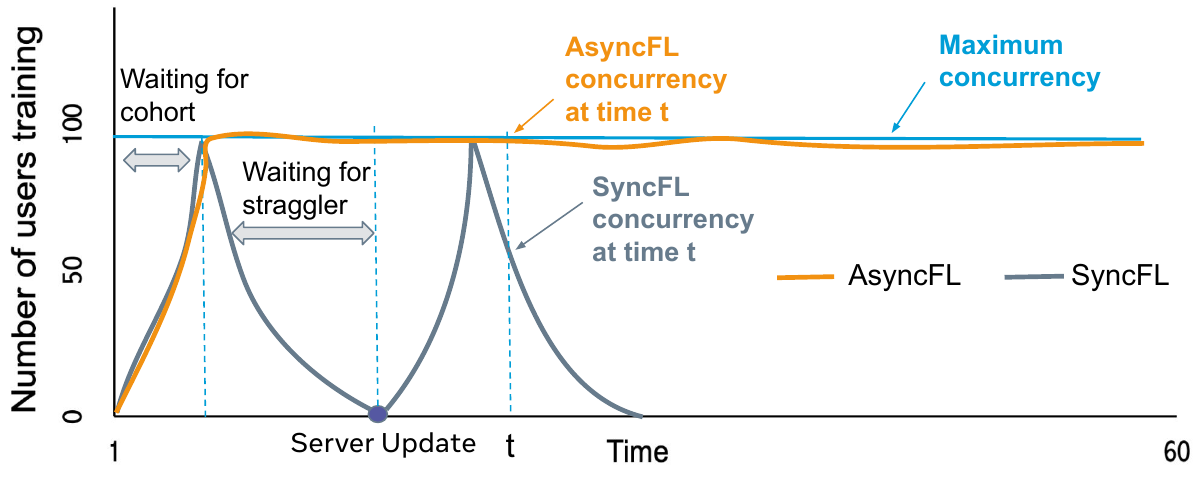}
    \end{minipage}

    \caption{Training progress for asynchronous and synchronous FL, and the associated delays. Synchronous FL proceeds in \emph{rounds}. The number of active clients increases at the beginning of a round as clients join the cohort, and it falls gradually towards the end of the round due to stragglers. In asynchronous FL, the number of active clients stays relatively constant over time; as clients finish training and upload their results, other clients take their place.}
	\label{fig:concurrency_sync_async}	
\end{figure}

\section{Background}
\label{sec:background}
\textbf{Synchronous FL. }
Significant attention has been paid towards synchronous FL methods (\syncfl), as they are perhaps easier to
analyze and implement. \syncfl methods are also better suited for privacy -- training and aggregating updates over a large number of clients render most inference attacks ineffectual \citep{melis2019exploiting, zhu2020deep, geiping2020inverting, lam2021gradient}. However, synchronous FL methods are prone to stragglers, proceeding at the pace of the slowest client. \cite{google-fl} proposed using over-selection to tap 30\% more clients than the target cohort size and wait for the fastest replies to overcome this issue. However, over-selection comes at the cost of wasting clients' resources and introduces selection bias. We study these problems in Appendix~\ref{sec:appendix_stragglers} and~\ref{sec:appendix_diversity}.

In \syncfl optimization, FedAvg, a generalization of local SGD, has been shown to work well empirically \citep{google-fedavg}. FedProx \citep{fedprox} improves upon FedAvg by adding a proximal term~$\mu$ to the local SGD optimizer. FedAvgM \citep{fedavgm} further improves convergence by adding server-side momentum. Adaptive methods such as FedAdam \citep{adaptive-fl-optimization} are effective in cross-device FL settings and have comparable performance to FedAvgM. These optimizers often focus on heterogeneity, asymptotic convergence, and communication efficiency in \emph{low \concurrency} settings. In this paper, we focus on \emph{high \concurrency} settings. In these settings, \syncfl is not scalable and is inefficient. Typically, the optimal server learning rate increases with \concurrency{}; aggregating over more users has a variance-reducing effect, enabling the server to take larger steps. Consequently, higher \concurrency reduces the number of rounds needed to reach a target accuracy because of a larger server learning rate. However, to have stable, convergent training dynamics, the server learning rate cannot be increased indefinitely; eventually it saturates, resulting in a sub-linear speed-up similar to in large-batch training (\cite{imagenet-1hr, scaling_nmt, bert_in_76, imagenet_minutes, lars, measuring_data_parallelism}). 
As a result, \syncfl systems cannot accelerate training through parallelism beyond a few hundred clients and exhibit decreasing efficiency with increasing \concurrency (Figure \ref{fig:sent140_rounds}). 

\textbf{Asynchronous FL.}
Asynchronous FL methods are a good match for cross-device FL settings, where clients have different compute power and intermittent availability \citep{fl_field_guide}. Most asynchronous FL (\asyncfl) works, such as the works \citep{fedasync, paper3, paper2, paper1, safa, li2021stragglers} have been focused on solving the straggler problem by designing asynchronous FL algorithms. However, these proposals include aspects that make them impractical for real-world FL deployment at scale. For instance, \cite{paper2, li2021stragglers} profile client speed, \cite{paper1} broadcast the model updates to all clients, \cite{fedasync} update the server model on every client, placing a significant burden on the clients and server, and \cite{paper3} assume all clients have the same speed. 

In \emph{``fully''} \asyncfl methods (e.g., \cite{fedasync}), every client update results in a server model update. This has implications for privacy and scalability. Considering privacy, when every client update forces a server update, \secagg cannot be used; secure aggregation's benefit is in hiding individual updates by combining them in an aggregate. Additionally, providing user-level DP in \asyncfl is only feasible with local differential privacy (LDP), where the client clips the model update and adds noise locally to it before sending it to the server. LDP for high dimensional data has been criticized for poor privacy-utility trade-off (\cite{esa_empirical, prochlo}). 


\textbf{Secure Aggregation.}
SecAgg is a privacy enhancing technology based on cryptographic primitives \citep{secagg, bell2020secure, so2021turbo} or hardware-based Trusted Execution Environment (TEE)~\citep{secagg-sgx}. SecAgg enhances privacy by obfuscating a client's update with many other clients' updates, protecting against the honest-but-curious server threat model~\citep{secagg}. \algname is compatible with SecAgg.

\textbf{Differential Privacy.}
DP \citep{dp} provides a rigorous formulation of the release of information derived from private data. In the context of machine learning, differentially private training \citep{dp-sgd} limits what can be learned about the original training data. 
\begin{definition}
    A randomized mechanism M: $U \mapsto R$ satisfies ($\epsilon, \delta$)-DP if for all adjacent datasets $D, D' \in U$ and for any subset of outputs $S \subseteq R$, the following holds:
    \[
    Pr[M(D) \in S] \leq e^{\epsilon} \cdot Pr[M(D') \in S] + \delta.
    \]
\end{definition}
The definition of adjacent datasets $D,D'$ is domain and application dependent. In the context of cross-device FL, we consider $D,D'$ to be two datasets of training examples, where each example is associated with a client. Then, $D$ and $D'$ are adjacent if $D'$ can be formed by adding or removing all of the examples associated with a single client from $D$ (i.e., \textit{user-level privacy} \citep{google-dp-fl}). In this paper, we consider the \emph{global DP} (GDP) setting, where a trusted server collects, clips, and aggregates the client updates. The server then adds noise to the aggregated updates. Compared to LDP, GDP provides a better privacy-utility trade-off for high dimensional data. This setting of DP relies on using SecAgg, as the server is responsible for implementing DP.


\section{\algname{}: Federated Learning with Buffered Asynchronous Aggregation}
\label{sec:alg}
We consider the following optimization problem:
\begin{equation}
    \min_{w \in \mathbb{R}^d} f(w) := \frac{1}{m} \sum_{i=1}^m p_i F_i(w)
\end{equation}
where $m$ is the total number of clients and the function $F_i$ measures the loss of a model with parameters $w$ on the $i$th client's data, and $p_i > 0$ weighs the importance of the data from client $i$. The goal is to find a model that fits all clients' data well on (weighted) average. In FL, $F_i$ is only accessible by client $i$. 

\syncfl methods need to aggregate and synchronize clients after each round. Hence, concurrency in \syncfl is equal to the number of clients that participate in a given round. In asynchronous methods, concurrency is the number of clients training at a given point in time (Figure \ref{fig:concurrency_sync_async}). In \algname{} (Algorithm \ref{alg:server}), clients enter and finish local training asynchronously. However, the server model is not updated immediately upon receiving every client update. Instead, client updates are stored in a \textit{buffer}. A server update only takes place once $K$ client updates are in the buffer, where $K$ is the size of the buffer and is a tunable parameter. However, we find that $K = 10$ is a good choice and does not require tuning. The buffer can be implemented by using a Trusted Execution Environment (TEE) \citep{secagg-sgx, tee_ppfl} or through a cryptographic algorithm \citep{so_buffsecagg}. 
Note that $K$ is independent of concurrency --- the extra degree of freedom introduced by the buffer allows the server to choose the model update frequency instead of coupling concurrency with the server model update as in \syncfl. The extra degree of freedom  allows \algname{} to achieve data efficiency at high concurrency while being compatible with secure aggregation and DP.

\algname is compatible with SecAgg, because with $K>1$ updates in the buffer, SecAgg provides its promise by hiding individual updates in the aggregate. Since \algname supports SecAgg, it can be easily extended to provide global DP. In asynchronous FL settings, the server has no control over which clients participate in a particular model update and client availability is dynamic. For such settings, privacy amplification by sampling is not feasible. DP-FTRL \citep{dpftrl} has emerged as a suitable solution to address this issue. \algname with DP-FTRL is straightforward and we show in Algorithm \ref{alg:server} how one can extend \algname to provide global DP. The three functions, \textit{InitializeTree}, \textit{AddToTree}, and \textit{GetSum} in Algorithm \ref{alg:server} correspond to those of the DP-FTRL algorithm. We defer to Section B.1 in \citep{dpftrl} for more in-depth descriptions of the functions. 



\section{Convergence Analysis}
\label{sec:convergence}
In this section, we provide a convergence guarantee for \algname{} in the smooth, non-convex setting. Most previous works analyze synchronous federated learning methods, such as the works in \citep{dont_use_minibatch, fedprox, adaptive-fl-optimization, fedavg_conv_Li, Stitch_LocalSGD, restarted_SGD_Yu, li2019communication, haddadpour_localdescent, SCAFFOLD}. In contrast, in \algname{}, clients train asynchronously, and the client updates are first aggregated in a buffer before producing a server model update. Hence, it is essential to understand the relationship between client computation and server communication under asynchrony with buffered aggregation. 

\textbf{Notation.} We use the following notation throughout: $[m]$ represents the set of all client indices, $\nabla F_i(w)$ denotes the gradient with respect to the loss on client $i$'s data, $f^*$ denotes the minimum of $f(w)$, $g_i(w; \zeta_i)$ denotes the stochastic gradient on client $i$, $K$ is the buffer size for aggregation before producing each server update, and $Q$ denotes the number of local steps taken by each client. We make the following assumptions throughout.

\begin{algorithm}[t]
  \caption{\texttt{\algname{}-server}}\label{alg:server}
    \begin{algorithmic}[1]
    \Require server learning rate $\eta_g$, client learning rate $\eta_{\ell}$, client SGD steps $Q$, buffer size $K$ 
    \colorbox{babyblue}{$n$ dataset size, noise scale $\sigma^2$, clip norm $L$ (DP) }
    \Ensure FL-trained global model
    \State \colorbox{babyblue}{$\mathcal{T} \leftarrow InitializeTree(n, \sigma^2, L)$ (DP)}
    \Repeat
    \State{$c \leftarrow$ sample available clients \Comment{async}}
    \State{run \texttt{\algname{}-client}$(w^t, \eta_{\ell}, Q)$ on $c$ \Comment{async}} 
    \If{receive client update}
    \State{$\Delta_i \leftarrow$ received update from client $i$}
    \State{\colorbox{babyblue}{$\Delta_i \leftarrow Clip(\Delta_i, L)$ (DP)} \Comment{in TEE}}
    \State{$\overline{\Delta}^t \leftarrow \overline{\Delta}^t + \Delta_i$ \Comment{in TEE}}
    \State{$k \leftarrow k + 1$} 
    \EndIf
    \If{$k == K$} 
    \State {$\overline{\Delta}^t \leftarrow \frac{\overline{\Delta}^t}{K}$}
    \State \colorbox{babyblue}{$\mathcal{T} \leftarrow \text{AddToTree}(\mathcal{T}, t, \overline{\Delta}^t)$ (DP) \Comment{in TEE}}
    \State \colorbox{babyblue}{$\overline{\Delta}^t \leftarrow \overline{\Delta}^t + \text{GetSum}(\mathcal{T}, t)$ (DP) \Comment{in TEE}}
    \State $w^{t+1} \leftarrow w^t - \eta_g \overline{\Delta}^t$
    \State  $\overline{\Delta}^t \leftarrow 0, k \leftarrow 0, t \leftarrow t + 1$ \Comment{reset buffer}
    \EndIf
    \Until{Convergence}
    \end{algorithmic}
\end{algorithm}

\begin{algorithm}[t]
  \caption{\texttt{\algname{}-client}}\label{alg:client}
    \begin{algorithmic}[1]
    \Require server model $w$, client learning rate $\eta_{\ell}$, number of client SGD steps $Q$
    \Ensure client update $\Delta$
     \State{$y_0 \leftarrow w$ }
     \For{$q=1:Q$}
     \State{$y_q \leftarrow y_{q-1} - \eta_{\ell} g_q(y_{q-1} )  $}
     \EndFor
     \State{$\Delta \leftarrow y_0 - y_q$}
     \State{Send $\Delta$ to server}
    \end{algorithmic}
\end{algorithm}

\begin{assumption}
(Unbiasedness of client stochastic gradient) $\mathbb{E}_{\zeta_i}[g_i(w ; \zeta_i))] = \nabla F_i(w)$.
\label{assumption:unbiased}
\end{assumption}
\begin{assumption}
(Bounded local and global variance) for all clients $i \in [m]$, $$\mathbb{E}_{\zeta_i|i}[\norm{g_i(w; \zeta_i) - \nabla F_i(w)}^2] \leq \sigma^2_{\ell},$$
and 
$$\frac{1}{m} \sum_{i=1}^m \norm{\nabla F_i(w) - \nabla f(w)}^2 \leq \sigma^2_{g}.$$ 
\label{assumption:bounded_var}
\end{assumption}

\begin{assumption}
(Bounded gradient) $\norm{\nabla F_i}^2 \leq G$ for all $i \in [m]$.
\label{assumption:bounded_grad}
\end{assumption}

\begin{assumption}
(Lipschitz gradient) for all client $i \in [m]$, the gradient is $L$-smooth, 
$$\norm{\nabla F_i(w) - \nabla F_i(w')}^2 \leq L \norm{w - w'}^2. $$
\label{assumption:lipz}
\end{assumption}
Assumptions~\ref{assumption:unbiased}--\ref{assumption:lipz} are commonly made in analyzing federated learning algorithms (\cite{adaptive-fl-optimization, fedavg_conv_Li, Stitch_LocalSGD, restarted_SGD_Yu}). We make an additional assumption on the staleness under asynchrony.

\begin{assumption}
(Bounded Staleness when $K=1$) For all clients $i \in [m]$ and for each server step $t$, the staleness $\tau_i(t)$ between the model version in which \texttt{FedBuff-client} uses to start local training, and the model version in which $\Delta^i$ is used to modify the global model is not larger than $\tau_{\max,1}$ when $K=1$. 
\label{assumption:staleness1}
\end{assumption}

\textbf{Remark.} More generally, the staleness upper-bound depends on the buffer size $K$. When the buffer size increases, the server iterates are updated less frequently, hence reducing the number of server steps in between the initialization of client training and when the client updates are used for modifying the server model. Specifically, if Assumption 5 holds, then for any execution of FedBuff with $K>1$, the maximum delay $\tau_{\max,K}$ is at most $\lceil \tau_{\max, 1} / K \rceil$; see Appendix~\ref{appendix:staleness-K}.

\begin{theorem}
Let $\eta^{(q)}_{\ell}$ be the local learning rate of client SGD in the $q$-th step, and define $\alpha(Q):=\sum_{q=0}^{Q-1} \eta^{(q)}_{\ell}$, $\beta(Q):=\sum_{q=0}^{Q-1} (\eta^{(q)}_{\ell})^2$. Choosing $\eta_g \eta^{(q)}_{\ell} Q \leq \frac{1}{L}$ for all local steps $q=0,\cdots,Q-1$, the global model iterates in Algorithm \ref{alg:server} achieves the following ergodic convergence rate
\begin{equation}
    \begin{aligned}
        \frac{1}{T} \sum_{t=0}^{T-1} \norm{\nabla f(w^t)}^2 
         &\leq \frac{2 \Big(f(w^0) - f^* \Big)}{\eta_g \alpha(Q) T} +\frac{L}{2}\frac{\eta_g \beta(Q) }{ \alpha(Q)}   \sigma^2_{\ell} \\ + 3 L^2 Q  & \beta(Q)  \Big(\eta^2_g \tau_{\max, K}^2 + 1 \Big)  \Big(\sigma^2_{\ell} + \sigma^2_g +  G\Big).
    \end{aligned}
\end{equation}
\label{thm:main_ergodic_rate}
\end{theorem}
The proof of Theorem~\ref{thm:main_ergodic_rate} is provided in Appendix \ref{appendix:proof}, and leverages ideas from the perturbed iterate framework~\citep{AsyncSGD_purturbed}.
\begin{corollary}
Choosing constant local learning rate $\eta_{\ell}$ and $\eta_g$ such that $\eta_g \eta_{\ell} Q \leq \frac{1}{L},
$ the global model iterates in \algname{} (Algorithm \ref{alg:server}) are bounded by 
\begin{equation}
\begin{aligned}
        \frac{1}{T} \sum_{t=0}^{T-1} \mathbb{E}\left[\norm{\nabla f(w^t)}^2\right]
        &\leq \frac{2 F^*}{\eta_g \eta_{\ell} QT} +\frac{L}{2}\eta_g \eta_{\ell} \sigma_{\ell}^2 \\  &+ 3 L^2 Q^2\eta_{\ell}^2\Big(\eta^2_g  \tau_{\max, K}^2 + 1 \Big)  \sigma^2, 
         \end{aligned}
\end{equation}
where $F^* := f(w^0) - f^*$ and $\sigma^2 := \sigma^2_{\ell} + \sigma^2_g + G$. Further, choosing $\eta_{\ell} = \order{ 1/\left(K\sqrt{TQ}\right)}$, $\eta_g = \order{K}$, for all $\eta_g, \eta_{\ell}$ satisfying $\eta_g \eta_{\ell} Q \leq \frac{1}{L}$ and sufficiently large $T$, we have
\begin{equation}
\begin{aligned}
    & \frac{1}{T} \sum_{t=0}^{T-1} \mathbb{E}\left[\norm{\nabla f(w^t)}^2\right]
    \leq \order{ \frac{F^*}{\sqrt{TQ}} } \\ &+ \order{ \frac{ \sigma_{\ell}^2}{\sqrt{TQ}} }+ \order{ \frac{Q \sigma^2}{TK^2} } + \order{ \frac{ Q \sigma^2 \tau^2_{\max, 1}}{TK^2} },
     \end{aligned}
    \label{eq:constantLR_bound}
\end{equation}
\label{corollary:constant_LR_bound}
where we use the relation $\tau_{\max, K} \leq \lceil \tau_{\max, 1} / K \rceil$.
\end{corollary}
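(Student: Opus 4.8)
The plan is to derive the corollary from Theorem~\ref{thm:main_ergodic_rate} purely by specialization, with no new analysis: first freeze the per-step local learning rate to a constant to recover the first display, then tune that constant to obtain the asymptotic rate \eqref{eq:constantLR_bound}.

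First I would set $\eta^{(q)}_{\ell} = \eta_{\ell}$ for all $q = 0,\dots,Q-1$, so that $\alpha(Q) = Q\eta_{\ell}$ and $\beta(Q) = Q\eta_{\ell}^2$, and the step-size hypothesis $\eta_g \eta^{(q)}_{\ell} K Q \le 1/L$ collapses to the single inequality $\eta_g \eta_{\ell} K Q \le 1/L$. Substituting $\alpha(Q)$ and $\beta(Q)$ into the three terms of the theorem's bound gives $\frac{2 F^*}{\eta_g \alpha(Q) T K} = \frac{2 F^*}{\eta_g \eta_{\ell} QKT}$, then $3 L^2 Q \beta(Q)(\eta_g^2 K^2 \tau_{\max}^2 + 1)\sigma^2 = 3 L^2 Q^2 \eta_{\ell}^2 (\eta_g^2 K^2 \tau_{\max}^2 + 1)\sigma^2$, and $\tfrac{L}{2}\tfrac{\eta_g \beta(Q)}{\alpha(Q)}\sigma_{\ell}^2 = \tfrac{L}{2}\eta_g \eta_{\ell}\sigma_{\ell}^2$, where $F^* = f(w^0) - f(w^*)$ and $\sigma^2 = \sigma_{\ell}^2 + \sigma_g^2 + G$; adding these reproduces the first display of the corollary.

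Next I would plug in $\eta_{\ell} = c/\sqrt{TKQ}$ (any admissible constant $c$, so $\eta_{\ell} = \order{1/\sqrt{TKQ}}$) and track the $T$-dependence term by term. Using $\eta_{\ell}^{-1} = \sqrt{TKQ}/c$, the optimization term becomes $\frac{2 F^*}{\eta_g c}\cdot\frac{1}{\sqrt{TKQ}} = \order{F^*/(\eta_g\sqrt{TKQ})}$; the term $\tfrac{L}{2}\eta_g\eta_{\ell}\sigma_{\ell}^2$ becomes $\order{\eta_g\sigma_{\ell}^2/\sqrt{TKQ}}$; and, using $\eta_{\ell}^2 = c^2/(TKQ)$, the staleness term splits as $3 L^2 c^2\tfrac{Q}{TK}\sigma^2 + 3 L^2 c^2\tfrac{\eta_g^2 QK \tau_{\max}^2 \sigma^2}{T} = \order{Q\sigma^2/(TK)} + \order{\eta_g^2 QK\sigma^2\tau_{\max}^2/T}$. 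Collecting the four pieces yields \eqref{eq:constantLR_bound}.

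The only genuinely non-mechanical point — and the reason for the ``sufficiently large $T$'' clause — is the consistency of $\eta_{\ell} = \Theta(1/\sqrt{TKQ})$ with the constraint $\eta_g\eta_{\ell} K Q \le 1/L$: this reads $\eta_g c\sqrt{KQ/T}\le 1/L$, which holds exactly when $T \ge (L\eta_g c)^2 KQ$. Hence for any fixed $\eta_g > 0$ the bound is valid once $T$ exceeds this threshold, and everything else is routine substitution, so I do not anticipate a real obstacle.
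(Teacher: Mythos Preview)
Your proposal is correct and matches the paper's approach: the corollary is obtained directly from Theorem~\ref{thm:main_ergodic_rate} by specializing to a constant local learning rate (so $\alpha(Q)=Q\eta_\ell$, $\beta(Q)=Q\eta_\ell^2$) and then substituting $\eta_\ell=\order{1/\sqrt{TKQ}}$, with the ``sufficiently large $T$'' clause ensuring compatibility with the step-size constraint. The paper does not give a separate proof of the corollary, and your term-by-term verification is exactly the intended derivation.
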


Corollary \ref{corollary:constant_LR_bound} yields several insights:

\textbf{Worst-case iteration complexity. } Theorem \ref{thm:main_ergodic_rate} bounds the ergodic norm-squared of the gradient, a standard quantity studied in non-convex stochastic optimization. If this becomes small as $T$ grows, then it must be that the norm-squared of the gradient at later iterations is vanishing, implying the algorithm is converging towards a first-order stationary point. The bound in equation (\ref{eq:constantLR_bound}) contains three terms. The first is standard, expressing how the initialization impacts convergence, and it decreases at a rate of  $\order{1 / T}$ as is standard for SGD. The second two terms depend on different sources of variance, due to heterogeneity of functions at different clients ($\sigma^2_{g}$), stochasticity of gradients ($\sigma^2_{\ell} + G$), and stale gradients due to delays in asynchronous execution ($\tau_{\max, K}$). Corollary \ref{corollary:constant_LR_bound} is derived from Theorem \ref{thm:main_ergodic_rate} under a specific choice of constant learning rate, and yields interpretation of trade-offs between the convergence of loss, local and global variance, effect of client drift due to local steps, effect of staleness and effect of buffer size. We summarize the these trade-offs next. 


\textbf{Total communication cost.} Since each server step in \algname{} involves $K$ client trips between the server and the clients,  the total communication cost is $\order{K/\epsilon^2Q} + \order{Q\sigma^2 / K \epsilon} + \order{K Q\sigma^2 \tau^2_{\max, K} / \epsilon}$ in order to achieve $\frac{1}{T} \sum_{t=0}^{T-1} \mathbb{E}\left[\norm{\nabla f(w^t)}^2\right] \leq \epsilon$. This suggests a trade-off in the communication cost introduced by the effect of the buffer. We empirically investigate different values of $K$ in Section~\ref{sec:experiment} and observe this tradeoff in Table~\ref{tab:communication_cost}. 

\textbf{Relation between communication and local computation. } Note that in equation (\ref{eq:constantLR_bound}), increasing the number of local steps $Q$ improves the first term related to $F^*$ and the second term related to the local variance $\sigma^2_{\ell}$, but increases the third and fourth term. The first term with constant $F^*$ characterizes the distance to optimal loss. Hence, increasing local computation $Q$ reduces the loss faster, but it also leads to more drift, enlarging the effect of the local and global variance sum $\sigma^2$ and the impact of the worst-case staleness $\tau_{\max, K}$.


\textbf{Effect of staleness.} The effect of staleness between the initialization of \texttt{FedBuff-client} and the server update dissipates at the rate of $\order{1/T}$ according to the fourth term in equation $(\ref{eq:constantLR_bound})$. In addition, the maximum staleness $\tau_{\max, K}$ reduces as the buffer size $K$ grows. (see Appendix~\ref{appendix:staleness-K})



\section{Practical Improvements}
\label{sec:improvement}
\textbf{Staleness scaling.}
To control the effect of staleness $\tau_i(t)$ in client $i$'s contribution to the $t$-th server update, we down-weight stale updates using the following function: $s(\tau_i(t)) := 1 / (1 + \tau_i(t))^{0.5}$, similar to (\cite{fedasync}).

\textbf{Learning rate normalization.}
In practical FL implementations, each client is typically asked to perform a fixed number of \emph{epochs} over their local training data, rather than a fixed number $Q$ of steps, using a server-prescribed batch size $B$ which is the same for all clients. Because different clients have different amounts of data, some clients may only have a fraction of a batch. Previous work has suggested that increasing batch size and learning rate are complementary~\citep{imagenet-1hr,increase_batchsize,three-factors}. When a client performs a local update with a batch size smaller than $B$, we have it linearly scale the learning rate used for that local step; i.e., $\eta_{\lrnlocal} := \eta_{\ell} \cdot n^t_{i,q} / B$, where $n^t_{i,q} \le B$ is actual batch size used for the step. We find that this small change can improve \algname. A theoretical justification is provided in Appendix~\ref{sec:appendix_lrn}.




\section{Experiments}
\label{sec:experiment}

\begin{figure*}[t]
     \centering
     \begin{minipage}[t]{0.32\linewidth}
         \centering
         \includegraphics[width=\linewidth]{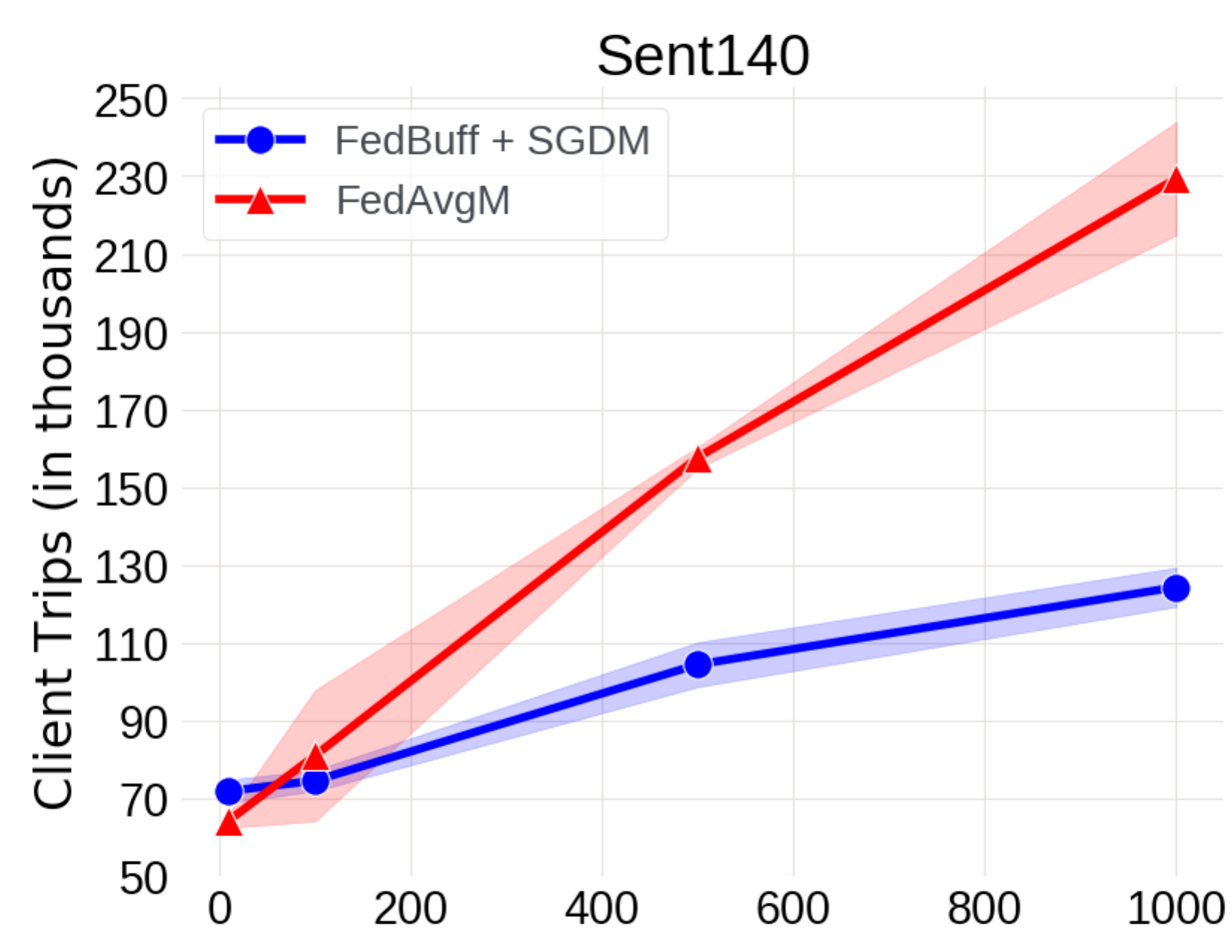}
     \end{minipage}
     \begin{minipage}[t]{0.32\linewidth}
        \centering
        \includegraphics[width=\linewidth]{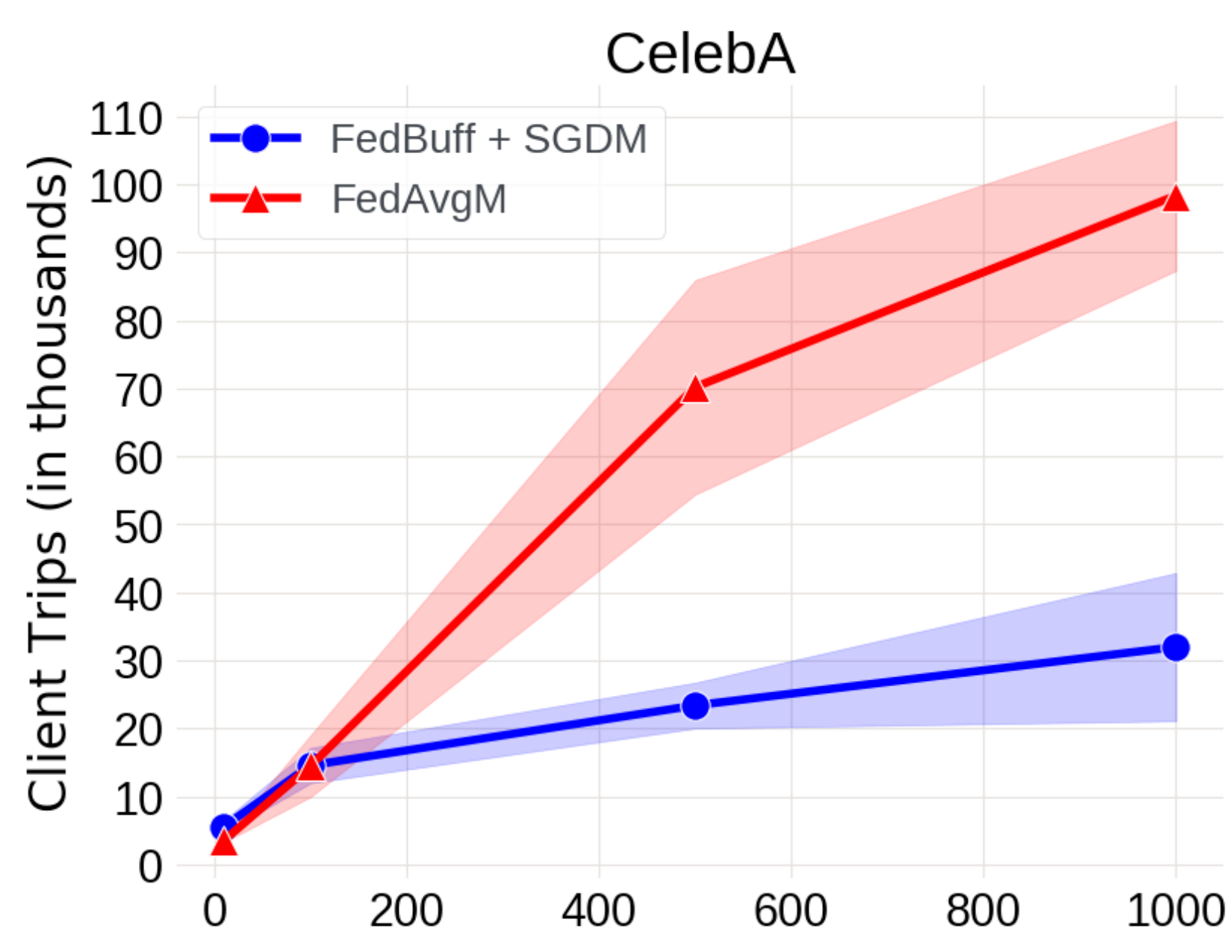}
     \end{minipage}
        \begin{minipage}[t]{0.32\linewidth}
        \centering
        \includegraphics[width=\linewidth]{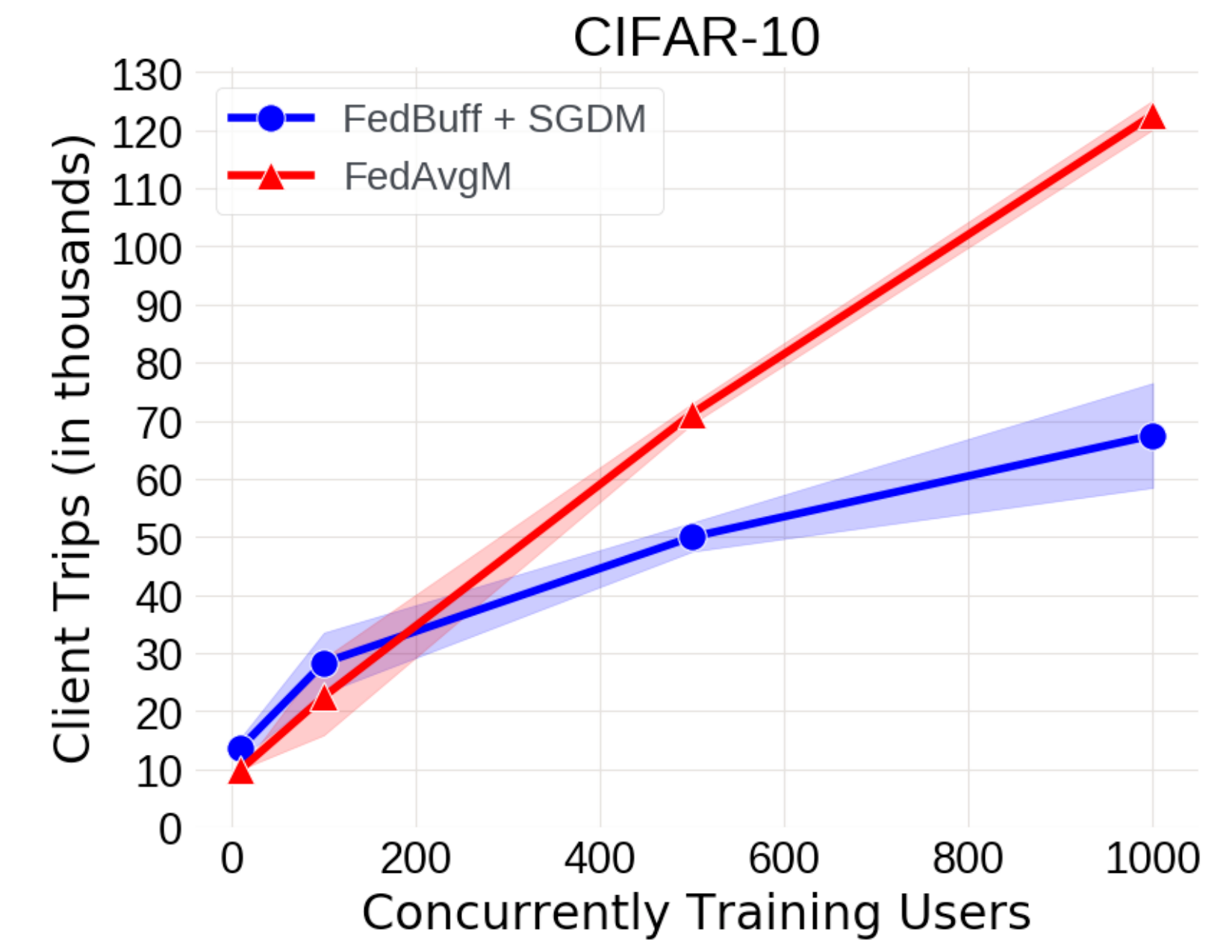}
     \end{minipage}
     \caption{Number of client trips to reach target validation accuracy for \algname + SGD with momentum at the server and FedAvgM. At low concurrency, \algname{} and FedAvgM perform similarly. However, as concurrency increases, \algname{} outperforms FedAvgM by increasingly larger amounts. In contrast to FedAvgM, \algname{}'s data-efficiency and communication-efficiency degrade less with concurrency.}
     \label{fig:large_upr}
\end{figure*}

\begin{table*}
\centering
\caption{\textbf{Average (speedup)} number of client trips (in units of 1000 updates) to reach target validation accuracy on CelebA and Sent140 (lower is better). We set concurrency = 1000 for all methods, $K=10$ for FedBuff and ran all methods for 600k client trips. ``$>600$'' indicates the target accuracy was not reached.}
\label{tab:baseline_convergence}
\begin{tabular}{lrrrrrr} 
\toprule
Dataset &  Accuracy &   FedBuff    & FedAsync       & FedAvgM    & FedAvg     & FedProx     \\ 
\midrule
CelebA  & 90\%              & 31.9  & 37.1 (1.2$\times$)    & 104 (3.3$\times$) & 231 (8.5$\times$) & 228 (8.4$\times$)  \\
Sent140 & 69\%             & 124.7 & 308.9 (2.5$\times$) & 216 (1.7$\times$) & $>600$   & $ >600$    \\
CIFAR-10 & 60\%             & 67.5 & 73.3 (1.1$\times$) & 122.7 (1.8$\times$) & 386.7 (5.7$\times$) &  292.7 (4.3$\times$)    \\
\bottomrule
\end{tabular}
\end{table*}

In this section, we compare the efficiency and scalability of \algname{} with other synchronous and asynchronous FL methods from the literature via simulation. We wish to understand how \algname{} behaves under different values of $K$, its scalability, and data efficiency.  

\textbf{Evaluation metrics.}
The standard evaluation metric for FL is the number of communication rounds to reach a target accuracy. However, asynchronous and synchronous methods do not have the same notion of rounds. For this reason, we compare different synchronous and asynchronous methods by the \emph{number of client trips} needed to reach a target accuracy. 
One client trip corresponds to one client round-trip communication. A client trip involves a client pulling the latest model from the server (download communication), performing one epoch of training on the local dataset (computation), then communicating the model update to the server (upload communication). Since the number of client trips measures both communication and computation costs, we use this as a proxy for wall-clock training time. We show wall-clock time simulation with stragglers in Appendix \ref{sec:appendix_stragglers}. 


\textbf{Datasets, models, and tasks.}
In order to provide a comparison with other work in the literature, we run experiments on three datasets: CelebA (\cite{celeba}), Sent140 (\cite{sent140}), and CIFAR-10 (\cite{cifar10}). Sent140 is a text classification dataset (binary sentiment analysis), whereas CelebA and CIFAR-10 are image classification datasets (multi-class classification).
For Sent140 and CelebA, we use the natural non-iid client partitions and models from the LEAF benchmark (\cite{leaf}). 
For Sent140, we train an LSTM classifier over 660,120 clients, where each Twitter account corresponds to one client. For CelebA, we train the same convolutional neural network classifier as LEAF over 9,343 clients, but with batch normalization layers replaced by group normalization layers (\cite{groupnorm}) as suggested in (\cite{batchnorm_with_groupnorm}). For CIFAR-10, we generate 5000 non-iid clients using a Dirichlet distribution with parameter 0.1, the same approach as in (\cite{fedavgm}).
More details about datasets, models, and tasks are provided in Appendix~\ref{sec:appendix_data_model}.

\textbf{Experimental setup.}
We implement all algorithms in PyTorch (\cite{pytorch}). We repeat each experiment with three different seeds and report the average. For asynchronous FL methods, we assume that clients arrive at a constant rate. We sample the delay distribution, the time delay between a client's download and upload operation, from a half-normal distribution. We choose this distribution because it best matches the delay distribution observed in our production FL system (See Appendix ~\ref{sec:other_training_distributions}). 
We also report results with two other delay distributions (uniform and exponential) in Appendix \ref{sec:other_training_distributions}. We find that \algname's performance improvements are consistent across different delay distributions.

\textbf{Baselines.}
We compare \algname with three SyncFL baselines, namely FedAvg (\cite{google-fedavg}), FedProx (\cite{fedprox}), FedAvgM (\cite{fedavgm}), and one AsyncFL baseline, FedAsync (\cite{fedasync}). 
For more details about the algorithms used and the experimental setup, see Appendix~\ref{sec:appendix_imple}.

\textbf{Hyperparameters.}
\label{sec:hyperparameter}
For all algorithms, we run hyperparameter sweeps to tune client and server learning rates $\eta_{\ell}$ and $\eta_g$, server momentum $\beta$, and the proximal term $\mu$ for FedProx. We set $\beta = 0$ for FedAvg. Each client update entails running one local epoch with batch size $B = 32$, rather than a fixed number of local steps.
See Appendix~\ref{sec:appendix_hyper} for additional details on hyperparameter tuning.

\textbf{Concurrency and $K$.}
In at-scale cross-device FL, only a small fraction of all clients participate in training at any point in time. As discussed earlier, concurrency --- the maximum number of clients that train in parallel --- significantly impacts the performance of FL algorithms. 
For a fair comparison between synchronous and asynchronous algorithms, we keep \concurrency{} the same across all configurations. Recall the example in Figure~\ref{fig:concurrency_sync_async} where \concurrency{}=100. For synchronous algorithms, this implies that 100 clients are training and contributing in each round. For asynchronous algorithms, this implies that 100 clients can train concurrently, and we can still vary the buffer size $K$, which will control how frequently updates occur.

\subsection{Results}
\label{subsec:results}

\textbf{Comparison of Methods. }
Table \ref{tab:baseline_convergence} shows the number of client trips needed to converge to the target accuracy on Sent140, CelebA and CIFAR-10 for each method considered. In Table \ref{tab:different_ks}, we show results with other values of $K$ and present the learning curves in Appendix \ref{sec:learning_curves}. Compared to FedBuff, the best synchronous method in the experiments (FedAvgM) requires 1.7-3.3$\times$ more updates, and FedAsync requires 1.1-2.5$\times$ more updates.




\textbf{Scalablility of \algname{}.}
Figure~\ref{fig:large_upr} shows that \algname{} scales much better to larger values of \concurrency{} than FedAvgM. \algname{} with $K=10$ scales better because it updates the server model more frequently than FedAvgM in high concurrency. When \concurrency{} is 10, both FedAvgM and \algname{} update the server model after every 10 client updates. However, when \concurrency{} is 1000, \algname{} with $K=10$ updates the server model after every 10 client updates, while FedAvgM updates the server model after 1000 client updates. One might argue that FedAvgM should run at lower concurrency, e.g. 10. However, that leads to longer wall-clock training time because less parallelism is exploited. We discuss this problem in Appendix \ref{sec:appendix_stragglers}. For synchronous FL methods, larger concurrency reduces training time but is also less efficient.
On the other hand, taking server model steps more frequently is not free; \algname{} has to deal with staleness as a consequence. Our empirical results show that the benefits from frequent updating of the server model outweigh the cost of staleness in client model updates. 



\begin{table}
\centering
\caption{\textbf{Average $\pm$ standard deviation} number of client trips to reach validation accuracy on CelebA (90\%), Sent140 (69\%) and CIFAR-10 (60\%) (lower is better, Units = 1000 updates) in \algname. We set concurrency = 1000 for all methods. }
\label{tab:different_ks}
\begin{tabular}{llrrllll} 
\toprule
Dataset & $K$ & Client trips  \\ 
\hline
\Tstrut{}
        &   1                      & 32.4 $\pm$  2.0       \\
CelebA  &   10                     & 31.9 $\pm$  8.9         \\       
        &   100                    & 71.6 $\pm$  6.8        \\ 
\hline
\Tstrut{}
        &    1                      & 190.0   $\pm$  11.9      \\
Sent140 &   10                      & 124.7   $\pm$  25.8      \\
        &   100                     & 178.2   $\pm$  13.1        \\
\bottomrule
\Tstrut{}
        &    1                      & 76.7 $\pm$ 9.6    \\
CIFAR-10 &   10                     & 67.5 $\pm$ 7.4         \\
        &    100                    & 102.5 $\pm$ 2.0          \\
\bottomrule
\end{tabular}
\label{tab:communication_cost}
\end{table}

\textbf{Choice of $K$.} Table \ref{tab:different_ks} presents the number of client trips to reach validation accuracy for different values of $K$, with fixed concurrency. We find that $K$=10 is a good setting across benchmarks. We analyze FedBuff with even larger values of $K$ in Table \ref{tab:large_k}, and show the training curves of \algname{} and other algorithms in Appendix \ref{sec:learning_curves}.

\begin{figure}
 \centering
 \begin{minipage}[t]{\linewidth}
     \centering
     \includegraphics[width=0.80\linewidth]{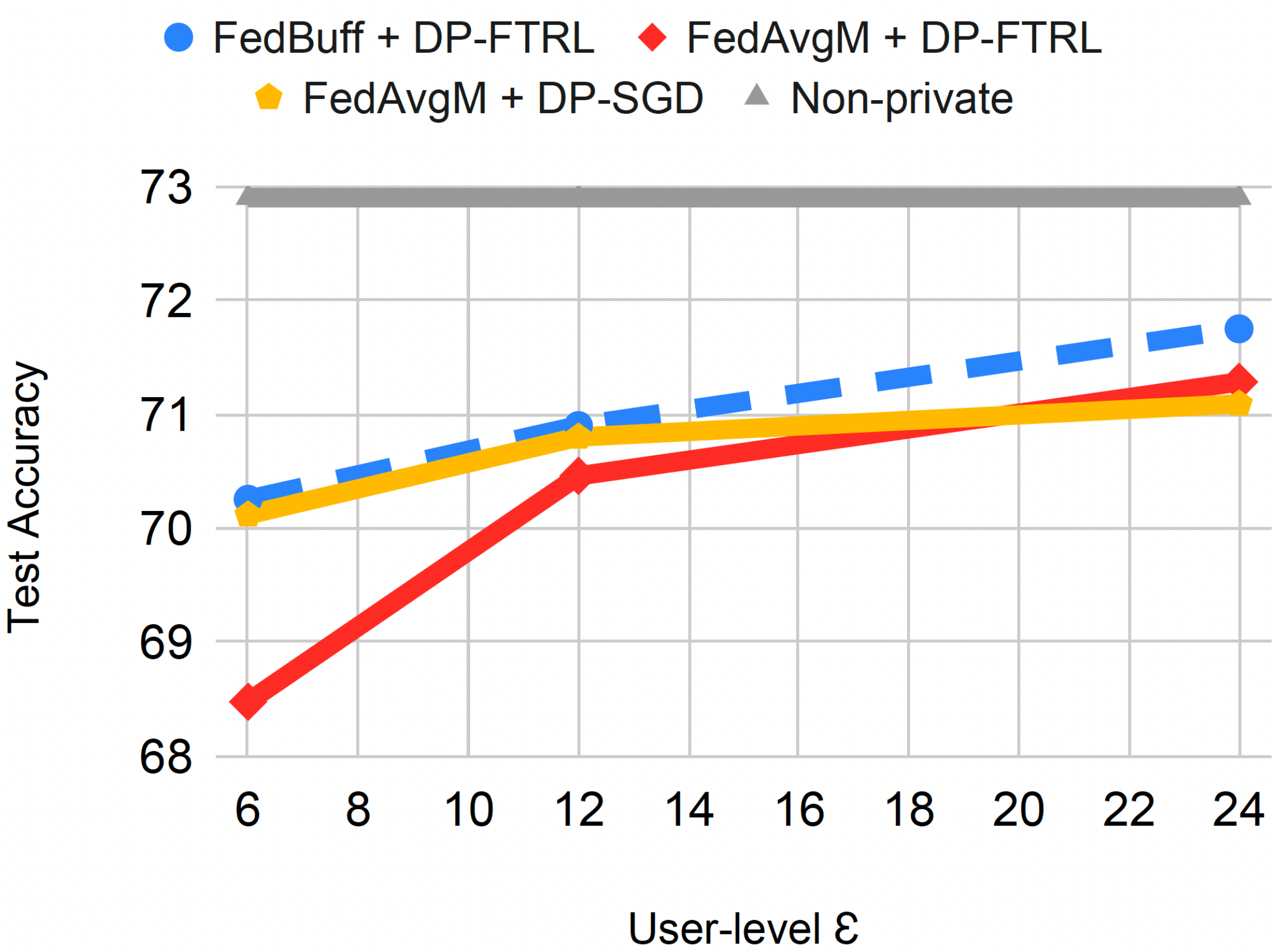}
 \end{minipage}
\caption{Accuracy on Sent140 under different levels of privacy ($\delta = 1e^{-7}$) for FedBuff with DP-FTRL versus FedAvgM with DP-FTRL and FedAvgM with amplified DP-SGD. For all methods, we use momentum at the server and fix the communication cost at 600 thousands. For \algname, we use $K = 10$. As for FedAvgM we use clients-per-round = 1000.}
\label{fig:dp_results}
\end{figure}

\textbf{\algname with Differential Privacy.} To evaluate the privacy-utility trade-off of \algname, we compare the final test accuracy of \algname with synchronous baselines after 600 thousands client trips, one pass over the dataset. Figure \ref{fig:dp_results} illustrates that \algname can outperform both FedAvgM with amplified DP-SGD and FedAvgM with DP-FTRL at high values of $\epsilon$, and be competitive for lower values of $\epsilon$. This result illustrates \algname's flexibility to be adapted for privacy. Even with DP, we find that $K=10$ is good setting. We find that a small $L$ can counteract the additional noise from taking more steps. For more details see Appendix \ref{sec:appendix_dp}. 







\section{Related Work}
\label{sec:related}
In addition to the discussion in Section \ref{sec:background} on related works, we discuss the other efforts in related domains.

\textbf{Asynchronous stochastic optimization. }
Asynchronous stochastic optimization in shared-memory and distributed-memory systems has been extensively studied (\cite{bertsekasTsitsiklis, Duchi_AsyncSGD_convex, hogwild, lian2015asynchronous_nonconvex, lian2018asynchronous, Revisit_dist_SGD_google, AsyncSGDDelay, AsyncSGD_purturbed, leblond17a_ASAGA, reddi2015_ASVRG, advance_async_Mike}). Asynchronous training is resilient to stragglers in both centralized and federated settings. The idea of aggregating $K$ asynchronous updates for convex objectives has been studied in \citep{dutta2018slow}. Although \citet{dutta2020slow} provide a guarantee for non-convex objectives, their assumption on the relationship between staleness and gradient moments is difficult enforce, in contrast to the bounded staleness assumption we consider which can be easily enforced. In this work, we consider heterogeneous objectives and show that in a federated environment with a large number of clients, the source of speed-up is not only due to avoiding stragglers but also achieving better efficiency at high concurrency.
 
\textbf{Large-batch training. }
Many proposals aim to understand and characterize conditions under which linear speed-up for distributed SGD and local SGD is achievable (\cite{dont_use_minibatch, yu2019linear, is_local_sgd_better, adaptive_sync_localSGD}). It is well accepted that increasing \concurrency{} eventually saturates beyond a certain batch size in synchronous methods (\cite{gradient-diversity, scaling_nmt, imagenet-1hr, scaling_nmt, bert_in_76, imagenet_minutes, lars, measuring_data_parallelism}). However, most existing research focuses on scalability across tens of server workers, each having iid-data - very different from the FL setting.

\section{Conclusions}
\label{sec:conclusions}
In this paper, we propose \algname{}, an asynchronous FL training scheme with buffered aggregation. Compared to \syncfl proposals, \algname{} scales to large values of concurrency. Compared to \asyncfl proposals, \algname{} is more private as it is compatible with SecAgg and differential privacy. At high levels of $\epsilon$, we demonstrate that \algname can outperform major \syncfl proposals.
We analyze the convergence behavior of \algname{} in the non-convex setting. Empirical evaluation shows that \algname{} is up to 3.3$\times$ more efficient than FedAvgM, and up to 2.5$\times$ more efficient than FedAsync. As for future work, we are aware that our analyses is on standard SGD. We leave extending the analysis to include momentum or adaptive learning rates as future work.

\subsubsection*{Acknowledgements}
We would like to thank Ilya Mironov, Maziar Sanjabi, Graham Cormode, Samuel Horvath and Luca Melis for the meaningful discussions and their valuable suggestions which significantly improved the quality of this paper. We would like to also thank the anonymous reviewers for their insightful feedback. 

\bibliography{reference}
\bibliographystyle{abbrvnat}

\clearpage
\onecolumn
\appendix
\section*{Appendix}
\section{Relationship Between Maximum Staleness and $K$}
\label{appendix:staleness-K}

Recall Assumption~\ref{assumption:staleness1}, that the staleness when executing \algname with $K=1$ is always bounded as $\tau_i(t) \le \tau_{\max, 1}$. In this section we will show that this implies the staleness bound $\tau_i(t) \le \lceil \tau_{\max, K} \rceil \le \lceil \tau_{\max, 1}/K \rceil$ when running \algname with $K > 1$.

Consider an execution of \algname. Let $r_i$ denote the time when the $i$'th client update is received by the server, and let $s_i < r_i$ denote the time when the client downloaded the serve model before performing local steps that resulted in the model update received at $r_i$.

When $K=1$, the staleness $\tau_i^{(1)}$ of the $i$'th update corresponds to the number of updates that occurs between when the client downloaded the model and when it completed local training and uploaded the model to the server,
\[
\tau_i^{(K=1)} = |\{j \colon s_i < r_j < r_i \}|.
\]
If Assumption~\ref{assumption:staleness1} holds, then $\max_{i} \tau_i^{(1)} \le \tau_{\max,1}.$

When $K > 1$, the server waits to aggregate $K$ client updates before stepping the global model. Thus, if $\tau_i^{(K=1)}$ client updates are received between the times $s_i$ and $r_i$, then at most $\tau_i^{(1)} / K$ server updates occur during this time. Hence $\tau_i^{(K)} \le \lceil \tau_i^{(1)} / K \rceil$, and therefore
\[
\tau_{\max, K} = \max_i \tau_i^{(K)} \le \max_i \lceil \tau_i^{(1)} / K \rceil \le \lceil \tau_{\max, 1} / K \rceil.
\]

Note that the times $s_i$ and $r_i$ only depend on the number of clients training concurrently, and the distribution of client execution times (the time it takes a client to complete one round of local updates; i.e., the distribution of $r_i - s_i$). These times are not impacted by the choice of $K$; rather $K$ only affects how frequently the server performs an update. Thus, the arguments above hold regardless of the distribution of client execution times, and only depend on  Assumption~\ref{assumption:staleness1}.

We also remark that the same relationship holds for the average delay; i.e., increasing $K$ reduces average delay. In particular, let
\newcommand{\taubar}{\overline{\tau}}
\[
\taubar_1 = \lim_{N \rightarrow \infty} \frac{1}{N} \sum_{i=1}^N \tau_i^{(1)},
\]
and suppose the limit exists. Clearly, if the limit exists and Assumption~\ref{assumption:staleness1} holds, then $\taubar_1 \le \tau_{\max, 1}$. Furthermore, then
\begin{align*}
\taubar_K &= \lim_{N \rightarrow \infty} \frac{1}{N} \sum_{i=1}^N \tau_i^{(K)} \\
&\le \lim_{N \rightarrow \infty} \frac{1}{N} \sum_{i=1}^N \tau_i^{(1)} / K \\
&= \taubar_1 / K.
\end{align*}

\section{Experiment Details}
\label{sec:appendix_exp}

\subsection{Datasets and Models}
\label{sec:appendix_data_model}



\textbf{Sent140.} We train a sentiment classifier on tweets from the Sent140 dataset~\citep{leaf, sent140} with a two-layer LSTM binary classifier. The dataset has 660,120 clients where each client is a Twitter account. The LSTM binary classifier contains 100 hidden units with a top 10,000 pretrained word embedding from 300D GloVe~\citep{glove}. The model has a max sequence length of 25 characters. The model first embeds each of the characters into a 300-dimensional space by looking up GloVe, passes through 2 LSTM layers and a 128 hidden unit linear layer to output labels 0 or 1. We set the dropout rate to 0.1. We split the data into 80\% training set, 10\% validation set, and 10\% test set using script provided by \cite{leaf}. Due to memory constraint, we use 15\% of the entire dataset using the script provided by \cite{leaf}, with split seed = 1549775860.

\textbf{CelebA.} We study an image classification problem on the CelebA dataset~\citep{celeba, leaf} using a four layer CNN binary classifier with dropout rate of 0.1, stride of 1, and padding of 2. As it is standard with image datasets, we preprocess train, validation, and test images; we resize and center crop each image to $32 \times 32$ pixels, then normalize by $0.5$ mean and $0.5$ standard deviation. The dataset has 9,343 clients where each client is a unique celebrity. 

\textbf{CIFAR-10.} We evaluate a multi-class image classification problem on CIFAR-10 \citep{cifar10} using a four layer CNN binary classifier with dropout rate of 0.1, stride of 1, and padding of 2. We normalize the images by the dataset mean and standard deviation. Following \citet{fedavgm}, we partition the dataset into 5,000 clients using a Dirichlet distribution with parameter 0.1 and split seed = 0.

\subsection{Implementation Details}
\label{sec:appendix_imple}
We implemented all algorithms in Pytorch \citep{pytorch} and evaluated them on a cluster of machines, each with eight NVidia V100 GPUs. Independently, we built a simulator to simulate large-scale federated learning environments. The simulator can realistically simulate clients, server, communication channels between clients and server, model aggregation schemes, and local training of clients. We intend to open-source the simulator, making it available for the research community.

For our experiments, we assume clients arrive to the FL system at a constant rate. To simulate device heterogeneity, we sample each client training duration from a half-normal, uniform, or exponential distribution. Moreover, our implementation has two other important distinctions. First, each client does one epoch of training over its local data; this distinction stems from two observations in our production stack: that our FL production stack has plenty of users to train on, and that we train small capacity models in FL (e.g., less than 10 million parameters) because of bandwidth and client compute. Second, we use the weighted sum of the client updates instead of the weighted average. This is because each client update has different levels of staleness; taking the average cannot capture the true contribution for each client. 

\subsection{Hyperparameters}
\label{sec:appendix_hyper}
For all experiments, we tune hyperparameters using Bayesian optimization \citep{bayesian-opt}. For optimizer on clients, we use minibatch SGD for all tasks. We select the best hyperparameters based on the number of rounds to reach target validation accuracy for each dataset. 

\begin{table}
\centering
\caption{The best performing hyperparameters for \cref{fig:dp_results}}
\label{tab:dp_hp}
\begin{tabular}{llllll} 
\toprule
 & \algname + DP-FTRL & SyncFL + DP-SGD & SyncFL + DP-FTRL\\
\hline
\Tstrut{}
               &  $\eta_{\ell}=1.0$ &  $\eta_{\ell}=1.0\cdot10^{-1}$ &  $\eta_{\ell}=1.0\cdot10^{-3}$  \\     
$\epsilon = 6$ & $\eta_g=4.3$ & $\eta_g=5.9\cdot10^{2}$ & $\eta_g=2.6\cdot10^{4}$ \\ 
               & $\beta=9.9\cdot10^{-1}$ & $\beta=3.0\cdot10^{-1}$ & $\beta=1.0\cdot10^{-1}$ \\
               & $L=1.2\cdot10^{-4}$ & $L=1.1\cdot10^{-2}$& $L=2.7\cdot10^{-4}$ \\
\hline
\Tstrut{}
               &  $\eta_{\ell}=1.0\cdot10^{-2}$ &  $\eta_{\ell}=1.0$ &  
               $\eta_{\ell}=1.0$  \\     
$\epsilon = 12$ & $\eta_g=5.4\cdot10^{1}$ & 
                $\eta_g=1.0\cdot10^{4}$ & $\eta_g=1.0\cdot10^{2}$ \\ 
                & $\beta=0$ & 
                $\beta=5.0\cdot10^{-1}$ & $\beta=9.0\cdot10^{-1}$ \\
                & $L=1.1\cdot10^{-3}$ & 
                $L=7.6\cdot10^{-3}$  & 
                $L=2.7\cdot10^{-1}$ \\
\hline
\Tstrut{}
               &  $\eta_{\ell}=1.0\cdot10^{-1}$ &  $\eta_{\ell}=1.0\cdot10^{-1}$ &  $\eta_{\ell}=1.0$  \\     
$\epsilon = 24 $ & $\eta_g=8.7\cdot10^{2}$ & 
                $\eta_g=5.1\cdot10^{2}$ & $\eta_g=8.0\cdot10^{3}$ \\ 
               & $\beta=3.0\cdot10^{-1}$ & $\beta=3.0\cdot10^{-1}$ & $\beta=5.0\cdot10^{-1}$ \\
               & $L=1.0\cdot10^{-4}$ & 
               $L=1.4\cdot10^{-2}$ & 
               $L=1.0\cdot10^{-4}$ \\
\bottomrule
\end{tabular}
\end{table}

\subsubsection{Hyperparameter Ranges}
Below, we show the range for the client learning rate ($\eta_{\ell}$), server learning rate ($\eta_g$), server momentum ($\beta$), proximal term ($\mu$) sweep ranges. 
\begin{align*}
\beta &\in \{0, 0.1, 0.2, 0.3, 0.4, 0.5, 0.6, 0.7, 0.8, 0.9, 0.99\} \\
\eta_{\ell} &\in [1\cdot10^{-8}, 10000] \\
\eta_g &\in [1\cdot10^{-8}, 10000] \\
\mu &\in \{0.001, 0.01, 0.1, 1\} \\
\end{align*}

\subsubsection{Best Performing Hyperparameters}
Table \ref{tab:best_hyper} illustrates the best value for client and server learning rates ($\eta_{\ell}$, $\eta_g$), server momentum ($\beta$), and proximal term ($\mu$) for tasks in Table \ref{tab:baseline_convergence}. For experiments in Table \ref{tab:large_k}, we set staleness exponent $\alpha = 10$. We set $\alpha = 0.5$ for all other experiments. 

\begin{table}
\centering
\caption{The best performing hyperparameters for Table \ref{tab:baseline_convergence}}
\label{tab:best_hyper}
\begin{tabular}{llllll} 
\toprule
 & \algname{} & FedAsync &  FedAvgM & FedAvg & FedProx  \\
\hline
\Tstrut{}
  & $\eta_{\ell}=4.7\cdot10^{-6}$ & $\eta_{\ell}=5.7$ &  $\eta_{\ell}=1.1\cdot10^{-1}$ & $\eta_{\ell}=1.0\cdot10^2$  & $\eta_{\ell}=4.9\cdot10^{-4}$ \\     
CelebA  & $\eta_g=1.0\cdot10^{3}$  & $\eta_g=2.8\cdot10^{-3}$ & $\eta_g=2.4\cdot10^{-1}$ & $\eta_g=1.6\cdot10^{-3}$ & $\eta_g=1.0\cdot10^2$   \\ 
  & $\beta=3.0\cdot10^{-1}$ &  &  $\beta=8.3\cdot10^{-1}$ &  & $\mu=1.0\cdot10^{-2}$   \\ 
\hline
\Tstrut{}
  & $\eta_{\ell}=1.3\cdot10^1$ & $\eta_{\ell}=1.7\cdot10^1$ &  $\eta_{\ell}=1.5$ & $\eta_{\ell}=2.6\cdot10^{-3}$  & $\eta_{\ell}=2.0\cdot10^{-3}$ \\     
Sent140  & $\eta_g=4.9\cdot10^{-2}$  & $\eta_g=1.5\cdot10^{-2}$ & $\eta_g=3.4\cdot10^{-1}$ & $\eta_g=1.0\cdot10^3$ & $\eta_g=1.0^3$   \\ 
  & $\beta=5.0\cdot10^{-1}$ &  &  $\beta=9.0\cdot10^{-1}$ &  & $\mu=1.0\cdot10^{-3}$   \\ 
\hline
\Tstrut{}

  & $\eta_{\ell}=1.95\cdot10^{-4}$  & $\eta_{\ell}=1.0\cdot10^{2}$  & $\eta_{\ell}=1.0\cdot10^1$  & $\eta_{\ell}=1.0\cdot10^1$  & $\eta_{\ell}=1.0\cdot10^1$  \\     
CIFAR-10  & $\eta_g=4.09\cdot10^1$ & $\eta_g=6.4\cdot10^{-5}$ & $\eta_g=1.02\cdot10^{-3}$ & $\eta_g=1.02\cdot10^{-3}$ & $\eta_g=1.02\cdot10^{-3}$ \\ 
& $\beta=0$ & & $\beta=9.0\cdot10^{-1}$  &  & $\mu=1.0\cdot10^{-3}$ \\
\bottomrule
\end{tabular}
\end{table}

\section{Additional Experiments}
\label{sec:appendix_additional}
\begin{table}
\centering
\caption{Speed up of FedBuff over FedAvgM and FedAsync with regards to number of client trips to reach target validation accuracy, for different delay distributions. We set concurrency = 1000 for all methods and $K=10$ for FedBuff. FedBuff's speed up is consistent across delay distributions.}
\label{tab:robustness}
\begin{tabular}{llrrlll} 
\toprule
Dataset & Delay Distribution & Speedup over FedAvgM & Speedup over FedAsync  \\ 
\hline
\Tstrut{}
        &   Uniform           & 4.7$\times$                        & 1.6$\times$       \\
CelebA  &   Half-Normal       & 3.3$\times$                        & 1.2$\times$      \\
        &   Exponential       & 4.3$\times$                        & 1.1$\times$      \\ 
\hline
\Tstrut{}
        &   Uniform             & 1.3$\times$                       &   1.2$\times$    \\
Sent140 &   Half-Normal         & 1.7$\times$                       &   2.5$\times$    \\
        &   Exponential         & 1.4$\times$                       &   2.0$\times$    \\
\bottomrule
\end{tabular}
\end{table}

\begin{figure}[t]
\centering
	\includegraphics[width=0.75\textwidth]{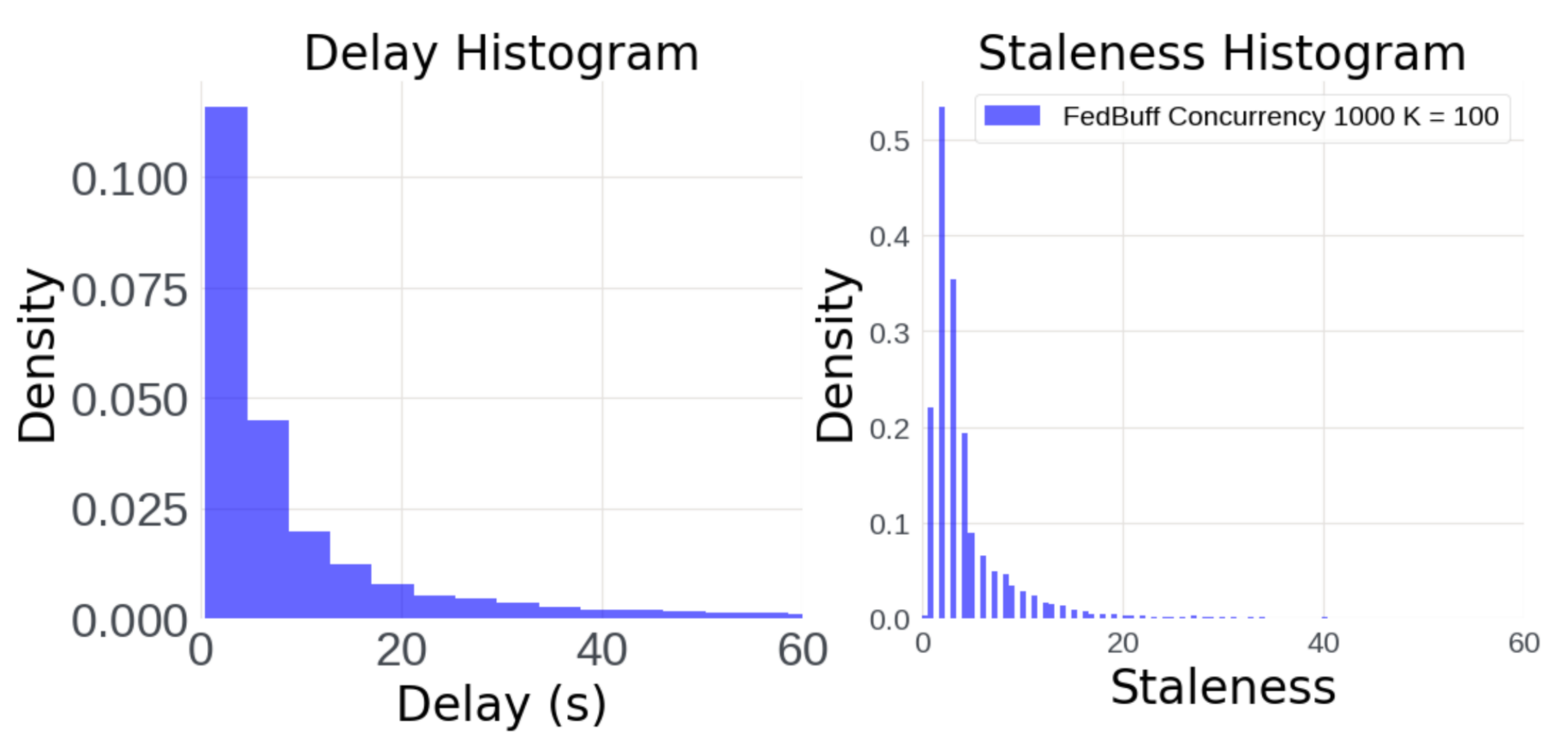}
	\caption{Delay and staleness distributions observed in production when training over millions of real clients for \algname{}.}
	\label{fig:prod_staleness}
\end{figure}

\subsection{Robustness to Delay Distributions.}
\label{sec:other_training_distributions}
In this section, we analyze the sensitivity of \algname{} to different staleness distributions. We compare \algname{} against other competing algorithms with different staleness distributions. Table \ref{tab:robustness} demonstrates that \algname{} is robust and \algname{}'s speed up is consistent. 
To have an accurate view of real-world delays, we observe the delays and their resulting staleness distribution in our production stack when training over millions of clients with concurrency = 1000 and K = 100. \Cref{fig:prod_staleness} demonstrates that a half-normal is a suitable delay distribution.

\subsection{Wall-Clock Time Simulation}
\label{sec:appendix_stragglers}

In this section, we study the speed up of FedBuff over FedAvgM in terms of wall-clock time for various concurrency levels. The results for Sent140 are in Figure \ref{fig:stragglers}. 

In cross-device FL, each client is a mobile phone with limited compute power and communication bandwidth \citep{fl-survey}. Moreover, clients can have vastly different number of examples. Recall, SyncFL methods wait for all the participating clients in a round to finish before updating the server model -- a round proceeds at the pace of the slowest client, the straggler effect. To mitigate the straggler problem, over-selection proposed in \citet{google-fl}, which selects 30\% more clients than the target number of clients to participate and waits for the fastest replies. 

To confirm the speedup gain by \algname in terms of wall-clock time, we simulate training time of FedAvgM and FedBuff using a random exponential time model from \cite{straggler_model}. The random exponential time model has been widely used to simulate the straggler effect in federated learning, e.g. in \cite{reisizadeh2020straggler, tandon2017gradient, yu2020straggler, yu2019lagrange, charles2021large}. 

We assume the time a client requires to perform local training is proportional to the number of examples the client has, same as the assumption in \cite{charles2021large}. Formally, let $n_i$ be the number of examples held by client $i$, and let $T_i$ be the amount of time required by client $i$ to perform local training. Also assume that there is a constant $\lambda > 0$ such that 
\[
T_i \sim Exp( \frac{1}{\lambda n_i}).
\]

In this context, $\lambda$ is the straggler parameter. The larger the $\lambda$, the longer the expected client training time. For a given round $t$, let $C_t$ be number of clients required to close a round, and let $M$ be the total number of clients training concurrently with over-selection. If $T_1, \dots, T_M$ denote the raw times when clients complete the round and $T_{(1)} \le \dots \le T_{(M)}$ denote order statistic of the client training time, then $R_t$ for SyncFL is 
\[
R_t = T_{(C_t)}.
\]

If $T$ is the number of rounds to reach a target accuracy, the expected total training time for SyncFL is
\[
\mathbb{E}[T_{\text{SyncFL}}] = \sum_{t=0}^{T} R_t.
\]

In the case of \algname, the total wall-clock time is when the last client required to reach a target accuracy finishes training. Formally, let $N$ be the number of clients required to reach a target accuracy and $T_{(1)} \le \dots \le T_{(N)}$ denotes the order statistic of the client training time. Then the expected total training time for \algname is

\[
\mathbb{E}[T_{\text{\algname}}] = T_{(N)}.
\]

\begin{figure}[t]
     \centering
     \includegraphics[width=0.8\linewidth]{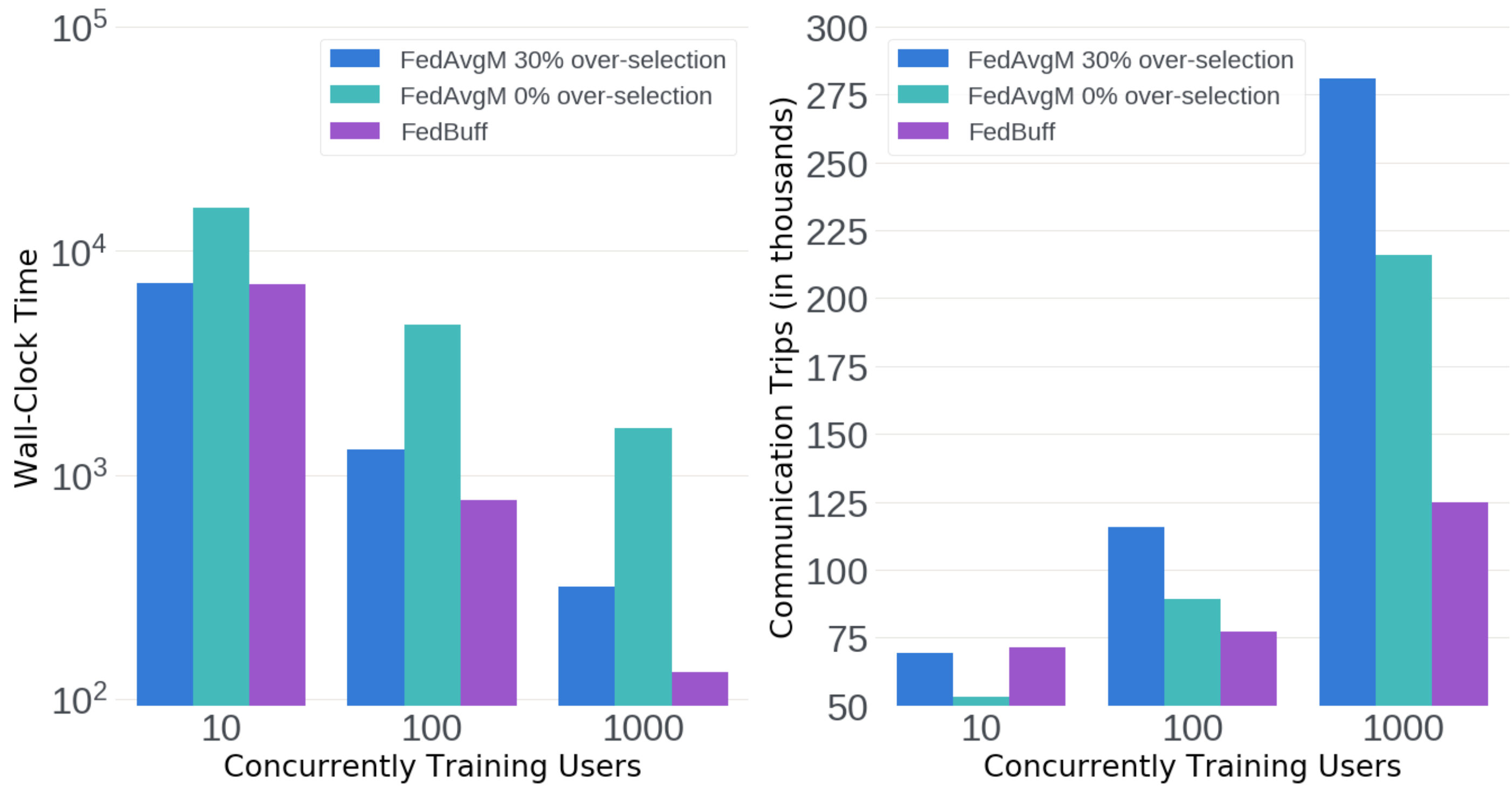}
     \label{fig:sent140_stragglers}
     \caption{(left) The total run time required to reach a target accuracy on Sent140 with $\lambda = 1$ under a random exponential time model. (right) The number of client trips required to reach a target accuracy. This measures the resource efficiency of the three algorithms. In all three configurations, increasing concurrency lowers wall-clock convergence time. However, \syncfl uses much more resources compared with \asyncfl. This highlights the importance of scalability, the ability to efficiently utilize increasing number of clients training in parallel. Both figures illustrates that FedBuff is faster than SyncFL (e.g., FedAvgM) even with over-selection with increasing concurrency, while being up to 3 times more resource efficient.}
    \label{fig:stragglers}
\end{figure}

\subsection{Bias}
\label{sec:appendix_diversity}

\begin{figure}[t]
     \centering
     \includegraphics[width=1\linewidth]{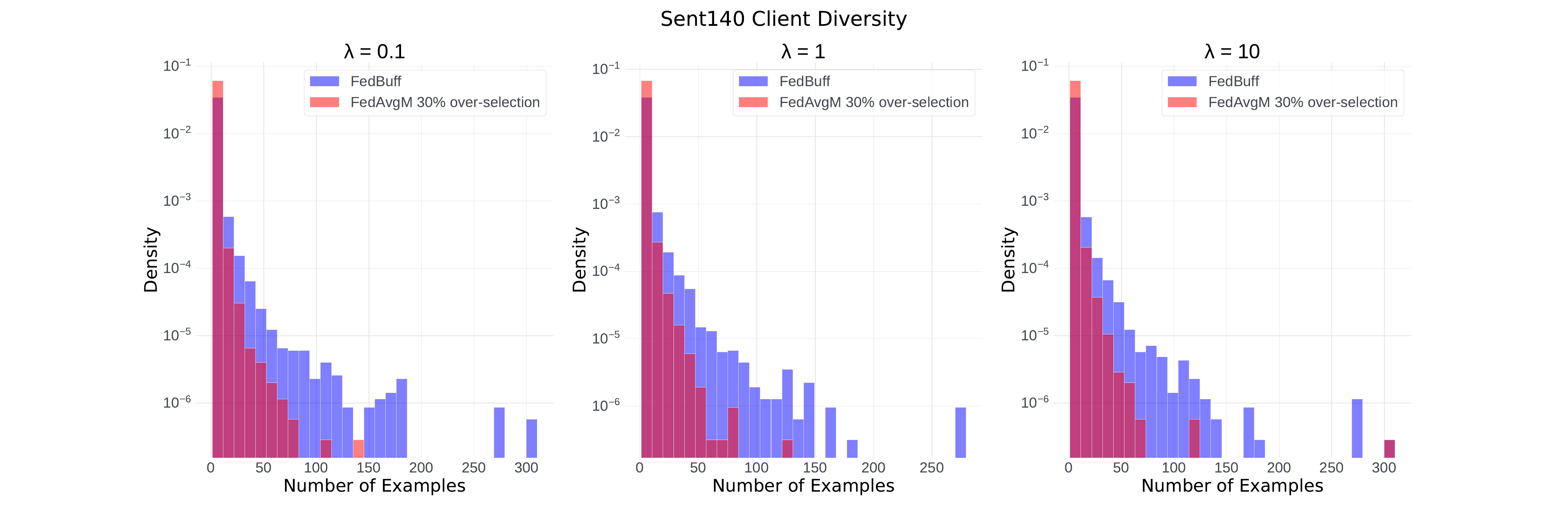}
     \label{fig:sent140_diversity}
     \caption{The distribution of Number of examples held by each participating client on Sent140 with varying $\lambda$. We fix concurrency = 1000 and buffer size $K = 10$. Since \algname does not drop the slowest clients, it can incorporate a more diverse set of clients.}
    \label{fig:diversity}
\end{figure}

In this section, we show the diversity of participating clients in \algname and FedAvgM. We should that SyncFL methods can introduce bias in their selection process while \algname does not. We use the same random exponential time model in Appendix \ref{sec:appendix_stragglers}. The result is given in Figure \ref{fig:diversity}. We find that in all levels of $\lambda$, \algname can incorporate clients with large local datasets. On the other hand, SyncFL (e.g., FedAvgM) with over-selection drops these clients leading to bias in the selection process. \algname does not drop the slowest clients, and can incorporate these clients with many examples.

\subsection{Large values of $K$}
\label{sec:appendix_large_k}

\begin{table}
\caption{Wall-clock time to reach target validation accuracy on CelebA and Sent140 when $K$ is large (Units for wall-clock time: mean training time for one client. Units for client trips updates: 1000 updates). For \algname{}, $K$=1000. FedAvgM with over-selection throws away results from the slowest 30\% of users in each round. These users are included when calculating the number of client trips}
\label{tab:large_k}
\centering
\begin{tabular}{lrrlr} 
\toprule
Dataset & Algorithm & Concurrency & Wall-Clock Time & client trips \\ 
\hline
\Tstrut{}
        &   FedBuff ($K$=1000)             & 1000    & 124            & 124\\
CelebA  &   FedAvgM                        & 1000    & 446 (3.6$\times$)     & 104\\
        &   FedAvgM, over-selection        & 1300    & 155 (1.25$\times$)    & 135\\ 
\hline
\Tstrut{}
        &   FedBuff ($K$=1000)             & 1000    & 228            & 228\\
Sent140 &   FedAvgM                        & 1000    & 927 (4.06$\times$)    & 216\\
        &   FedAvgM. over-selection    & 1300    & 322 (1.41$\times$)    & 281\\
        \bottomrule
\end{tabular}
\end{table}

In this section, we study the performance of \algname with large values of $K$. In Table \ref{tab:different_ks}, we observe that \algname{} trains fast when running with small values of $K$, relative to the concurrency. However, large values of $K$ are useful when providing user-level differential privacy, as essentially the noise is divided among larger number of clients (larger values of $K$)~\citep{dpftrl, google-dp-fl}. 

We compare the training speed of \algname{} and FedAvgM in a setting where both algorithms produce a server update from the same number of aggregated client updates. We fix concurrency at 1000, and have both FedAvgM and \algname{} perform updates after aggregating responses from $K=1000$ clients. In this setting, \algname{}'s main advantage is robustness to stragglers. It cannot take advantage of frequent server updates, yet still needs to deal with staleness. 


The synchronous FL system described in~\cite{google-fl} uses over-selection, typically by 30\%, to address stragglers. For example, if 1000 users are needed to produce a server model update, 1300 users are selected. The round will finish when the fastest 1000 users finish training. Results from the slowest 300 users will be thrown away. Over-selection makes synchronous FL more robust to stragglers, but at the cost of wasting some clients' compute and bandwidth. 

Table~\ref{tab:large_k} reports the wall-clock training time and number of client trips to reach target accuracy for \algname and FedAvgM with and without over-selection. We assume a half-normal training duration distribution since that matches the behavior observed in our production system (see Figure~\ref{fig:prod_staleness}). We find that over-selection reduces the impact of stragglers significantly. However, even with over-selection, \algname{} is 25\%-41\% faster than FedAvgM, despite using 30\% lower concurrency.

\subsection{\algname with Differential Privacy}
\label{sec:appendix_dp}

\begin{figure*}[t]
     \centering
     \begin{minipage}[t]{0.32\linewidth}
         \centering
         \includegraphics[width=\linewidth]{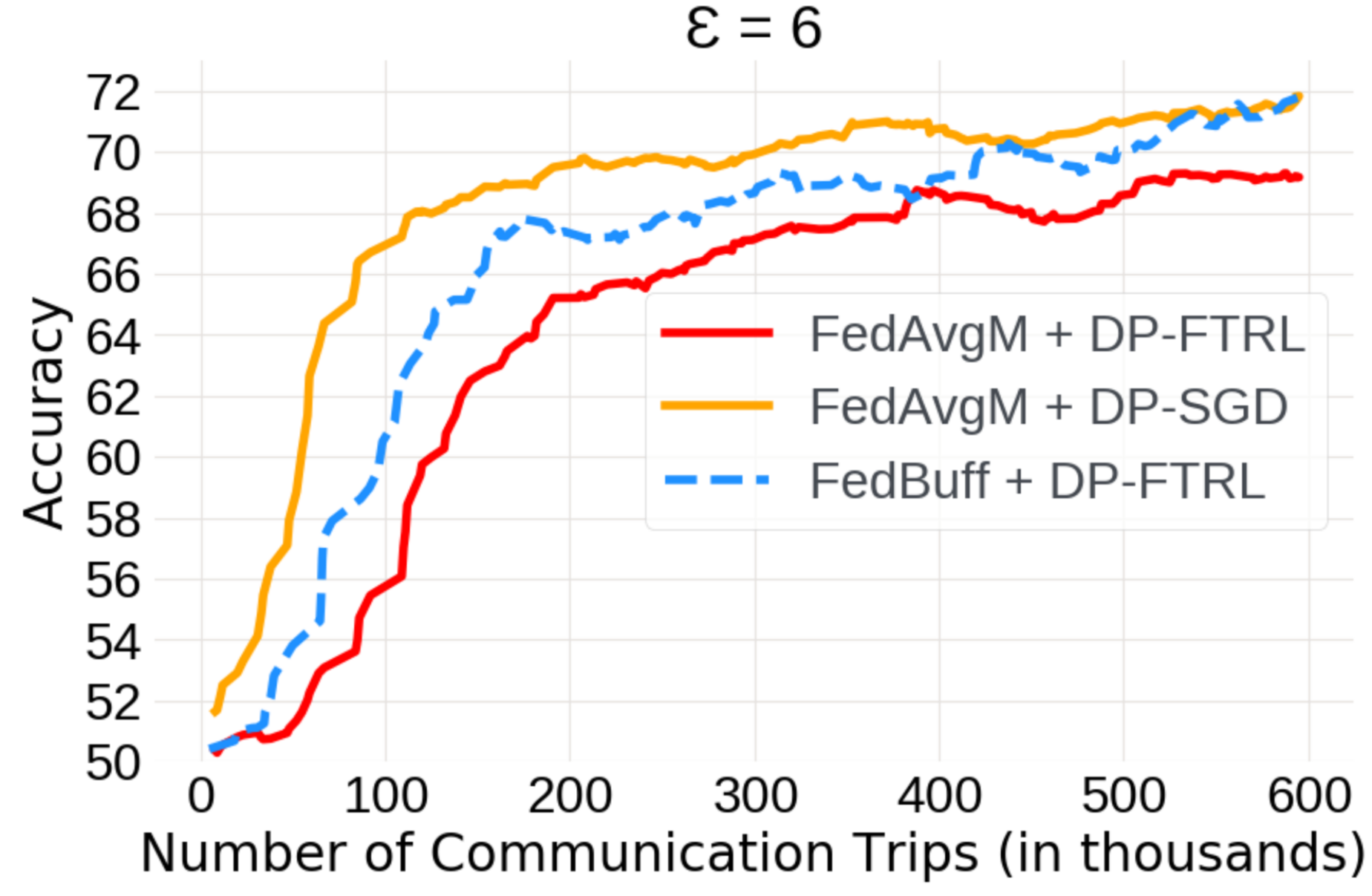}
     \end{minipage}
     \begin{minipage}[t]{0.32\linewidth}
        \centering
        \includegraphics[width=\linewidth]{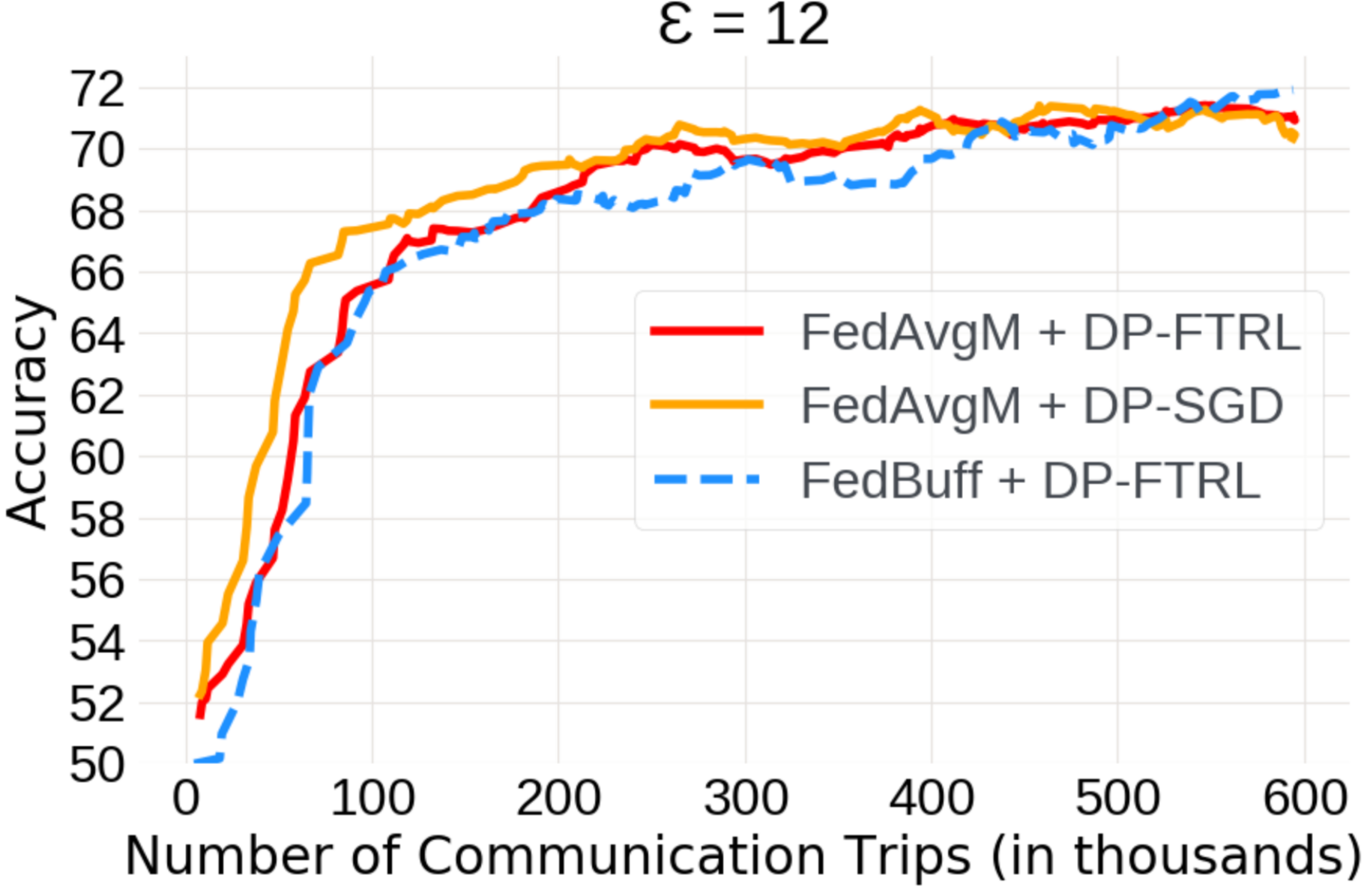}
     \end{minipage}
        \begin{minipage}[t]{0.32\linewidth}
        \centering
        \includegraphics[width=\linewidth]{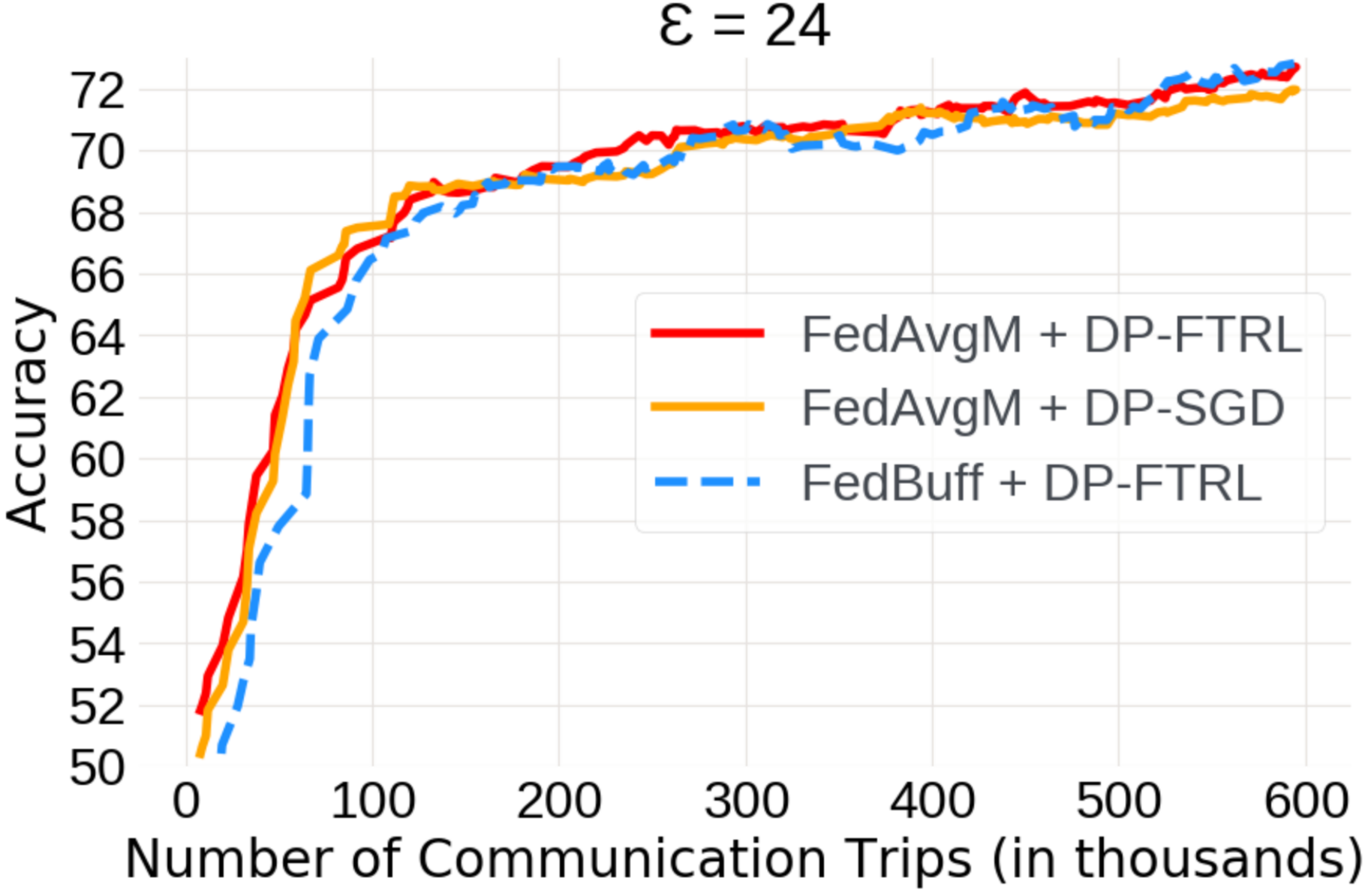}
     \end{minipage}
     \caption{Training curves for different values of $\epsilon$ for three FL configurations.}
     \label{fig:dp_learning_curves}
\end{figure*}

\Cref{fig:dp_learning_curves} shows the training curves of \algname with DP-FTRL, SyncFL with DP-SGD and DP-FTRL. At low values of $\epsilon$, \algname can achieve the same utility as SyncFL with amplified DP-SGD at the cost of slower convergence. At the same $\epsilon$, \algname with DP-FTRL achieves better utility and faster convergence compared to SyncFL with DP-FTRL. We find the source for this speed-up is from \algname's ability to tolerate much lower clipping norm value $L$. We repeat each experiment for 3 different seeds and take the average. The seeds are 0, 1, and 2.

\subsection{Learning Rate Normalization (\lrn{})}
\label{sec:appendix_lrn}
\subsubsection{Theoretical Justification}

Recall that \lrn{} described in Section \ref{sec:improvement} aims to address the situation where a client performing local updates may need to perform an update using a batch size $b$ smaller than the server-prescribed batch size $B$. This may occur when processing a batch at the end of one epoch, including the first batch if the client has fewer than $B$ samples in total. Since this only pertains to the local updates performed at clients, let us simply write such an update as
\begin{equation} \label{eq:sgd_varying_batch}
y_q = y_{q-1} - \eta_q g_q^{(b_q)},
\end{equation}
without referring to any specific client index $i$ or global iteration index $t$. Here $g_q^{(b_q)}$ denotes a stochastic gradient of $F$ (the client's local objective) evaluated at $y_q$ using batch size $b_q$.

Assume that $F$ is $L$-smooth, i.e.,
\[
\norm{\nabla F(y) - \nabla F(y')} \le L \norm{y - y'}.
\]
Also assume that the stochastic gradients are unbiased and have variance satisfying a weak growth condition. Specifically, assume that with batch size $b_q=1$,
\begin{align*}
\mathbb{E}[g_q^{(1)} | y_q] &= \nabla F(y_q), \\
\mathbb{E}[ \norm{g_q^{(1)} - \nabla F(y_q)}^2] &\le \sigma_\ell^2 + M \norm{\nabla F(y_q)}^2.
\end{align*}
Note that in the proof of Theorem~\ref{thm:main_ergodic_rate}, we make the stronger assumption of bounded variance, corresponding to $M=0$.

Furthermore, suppose that a mini-batch stochastic gradient $g_q^{(n)}$ with batch size $b_q > 1$ is obtained by averaging the gradients evaluated at $b_q$ independent and identically distributed samples. Thus,
\begin{align*}
\mathbb{E}[g_q^{(b_q)} | y_q] &= \nabla F(y_q), \\
\mathbb{E}[ \norm{g_q^{(b_q)} - \nabla F(y_q)}^2] &\le \frac{\sigma_\ell^2}{b_q} + \frac{M}{b_q} \norm{\nabla F(y_q)}^2.
\end{align*}

\textbf{Uniform batch sizes.} If all steps use the same batch size $b_q=B$ with constant step-size $\eta_q = \eta_\ell$ satisfying
\[
0 < \eta_\ell \le \frac{1}{L(M/B + 1)},
\]
then it is well-known that the SGD iterates satisfy
\[
\mathbb{E}\left[\frac{1}{Q} \sum_{q=1}^Q \norm{\nabla F(y_q)}^2\right] \le \frac{2(F(y_1) - F^*)}{\eta_\ell Q} + \frac{\eta_\ell L \sigma_\ell^2}{B};
\]
see, for example, Theorem~4.8 in L.~Bottou, F.~Curtis, and J.~Nocedal, ``Optimization methods for large-scale machine learning,'' \textit{SIAM Review}, 2019.

\textbf{Non-uniform batch sizes.} Now suppose that some steps will use batch size $1 < b_q \le B$. In this case one can show the following result.

\begin{theorem*}
Consider updates as in \eqref{eq:sgd_varying_batch} with per-iteration batch size
\[
\eta_q = \eta_\ell \frac{b_q}{B},
\]
and let $A_Q = \sum_{q=1}^Q \eta_q = \frac{\eta_\ell}{B} \sum_{q=1}^Q b_q$.
Suppose that $\eta_\ell$ satisfies
\[
0 < \eta_\ell \le \frac{1}{L (M/B + 1)}.
\]
Then
\[
\mathbb{E}\left[ \frac{1}{A_Q} \sum_{q=1}^Q \norm{\nabla F(y_q)}^2\right] \le \frac{2(F(y_1) - F^*)}{A_Q} + \frac{\eta_\ell L \sigma_\ell^2}{B}.
\]
\end{theorem*}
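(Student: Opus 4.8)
The plan is to run the textbook descent-lemma argument for smooth nonconvex SGD, carrying the per-step batch size $b_q$ through every inequality so that the linear scaling $\eta_q = \eta_\ell b_q/B$ can be cashed in at the right moment. First I would apply $L$-smoothness of $F$ to the update $y_{q+1} = y_q - \eta_q g_q^{(b_q)}$:
\[
F(y_{q+1}) \le F(y_q) - \eta_q \inp{\nabla F(y_q)}{g_q^{(b_q)}} + \frac{L\eta_q^2}{2}\norm{g_q^{(b_q)}}^2 .
\]
Taking expectation conditioned on $y_q$, unbiasedness collapses the inner-product term to $-\eta_q \norm{\nabla F(y_q)}^2$, and the bias--variance split together with the hypothesized mini-batch variance bound $\mathbb{E}[\norm{g_q^{(b_q)} - \nabla F(y_q)}^2]\le \sigma_\ell^2/b_q + (M/b_q)\norm{\nabla F(y_q)}^2$ gives $\mathbb{E}[\norm{g_q^{(b_q)}}^2 \mid y_q] \le (1+M/b_q)\norm{\nabla F(y_q)}^2 + \sigma_\ell^2/b_q$.

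The crux is substituting the scaled step size. Plugging $\eta_q = \eta_\ell b_q/B$ into the resulting bound, the coefficient multiplying $\norm{\nabla F(y_q)}^2$ becomes
\[
-\frac{\eta_\ell b_q}{B} + \frac{L\eta_\ell^2 b_q}{2B^2}\big(b_q + M\big) = \frac{\eta_\ell b_q}{B}\left(-1 + \frac{L\eta_\ell(b_q + M)}{2B}\right),
\]
and since $b_q \le B$, the step-size hypothesis $\eta_\ell \le 1/(L(M/B+1)) = B/(L(B+M))$ forces the parenthesis to be at most $-\tfrac12$; hence this coefficient is at most $-\eta_q/2$. Meanwhile the noise term simplifies to $\frac{L\eta_q^2}{2}\cdot\frac{\sigma_\ell^2}{b_q} = \frac{L\eta_\ell^2\sigma_\ell^2 b_q}{2B^2}$, yielding the one-step recursion
\[
\mathbb{E}[F(y_{q+1})] \le \mathbb{E}[F(y_q)] - \frac{\eta_q}{2}\,\mathbb{E}[\norm{\nabla F(y_q)}^2] + \frac{L\eta_\ell^2\sigma_\ell^2 b_q}{2B^2}.
\]

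Finally I would sum over $q = 1,\dots,Q$ so the $F$-terms telescope, bound the terminal value below by $F^*$, multiply by $2$, and divide by $A_Q = \sum_q \eta_q = (\eta_\ell/B)\sum_q b_q$. The leading term becomes $2(F(y_1) - F^*)/A_Q$, and the noise term becomes $\frac{L\eta_\ell^2\sigma_\ell^2}{B^2}\cdot\frac{\sum_q b_q}{A_Q} = \frac{L\eta_\ell \sigma_\ell^2}{B}$, which is exactly the claimed bound; the left side that emerges is the $\eta_q$-weighted average of the squared gradient norms, collapsing to the plain average in the statement when all $b_q = B$. I expect the only genuine difficulty to be the Step-2 algebra: verifying that the linear step-size scaling is precisely what makes the per-iteration descent condition reduce to the constant-batch condition evaluated at the largest admissible batch $B$, so that a single bound on $\eta_\ell$ controls every step at once. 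The remaining pieces --- the descent lemma, the conditional-expectation bookkeeping that uses the $1/b_q$ variance reduction from averaging i.i.d.\ per-sample gradients, and the telescoping --- are routine.
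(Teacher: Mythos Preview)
Your proposal is correct and follows essentially the same argument as the paper: descent lemma, mini-batch second-moment bound, substitution of $\eta_q=\eta_\ell b_q/B$, the observation that $b_q\le B$ together with $\eta_\ell\le B/(L(B+M))$ forces the gradient coefficient below $-\eta_q/2$, telescoping, and division by $A_Q$. Your remark that the argument actually yields the $\eta_q$-weighted average $\frac{1}{A_Q}\sum_q \eta_q\,\mathbb{E}\norm{\nabla F(y_q)}^2$ on the left side is exactly what the paper's proof arrives at as well.
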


First, note that $A_Q$ is strictly increasing in $Q$, since $1 \le b_q \le B$. In the special case where $b_q = B$ for all $q$ we exactly recover the result above for uniform batch sizes. More generally, when $b_q < B$ for some steps, the asymptotic residual is identical to the case with uniform-batch size. This justifies using the \lrn{} step-size rule $\eta_q = \eta_\ell b_q / B$ when encountering batches of size $b_q < B$. The proof follows from similar arguments to those of Theorem~4.8 in L.~Bottou, F.~Curtis, and J.~Nocedal, ``Optimization methods for large-scale machine learning,'' \textit{SIAM Review}, 2019.

\newcommand{\ip}[2]{\left\langle #1, #2 \right\rangle}
\begin{proof}
Let $\mathbb{E}_q$ denote expectation with respect to all randomness up to step $y_q$. Because $F$ is $L$-smooth,
\[
\mathbb{E}_q[F(y_{q+1})] - F(y_q) \le - \eta_q \ip{ \nabla F(y_q) }{ \mathbb{E}_q[g_q^{(b_q)}] } + \frac{\eta_q^2 L}{2} \mathbb{E}_k[ \norm{g_q^{(b_q)}}^2].
\]
From the weak growth assumption, it follows that
\[
\mathbb{E}_k[\norm{g_q^{(b_q)}}^2] \le \frac{\sigma_\ell^2}{b_q} + \left(\frac{M}{b_q} + 1\right) \norm{\nabla F(y_q)}^2,
\]
and thus
\begin{align*}
\mathbb{E}_q[F(y_{q+1})] - F(y_q) &\le - \eta_q \norm{ \nabla F(y_q) }^2 + \frac{\eta_q^2 L}{2} \left( \frac{\sigma_\ell^2}{b_q} + \left(\frac{M}{b_q} + 1\right) \norm{\nabla F(y_q)}^2 \right) \\
&= - \eta_q \left(1 - \frac{\eta_q L}{2} \left(\frac{M}{b_q} + 1\right)\right) \norm{\nabla F(y_q)}^2 + \frac{\eta_q^2 L \sigma_\ell^2}{2 b_q}.
\end{align*}
Based on the relationship $\eta_q = \eta_\ell b_q / B$ and the upper-bound assumed on $\eta_\ell$, we have
\begin{align*}
\frac{\eta_q L }{2} \left( \frac{M}{b_q} + 1\right) &\le \frac{1}{2}.
\end{align*}
Consequently,
\begin{align*}
\mathbb{E}_q[F(y_{q+1})] - F(y_q) &\le - \frac{\eta_q}{2} \norm{\nabla F(y_q)}^2 + \frac{\eta_q^2 L \sigma_\ell^2}{2 b_q}.
\end{align*}
Rearranging, we get
\begin{align*}
\frac{\eta_q}{2} \norm{\nabla F(y_q)}^2 &\le F(y_q) - \mathbb{E}_q[F(y_{q+1})] + \frac{\eta_q^2 L \sigma_\ell^2}{2 b_q}.
\end{align*}
Summing both sides over $q=1,\dots,Q$ and taking the total expectation yields
\begin{align*}
\sum_{q=1}^Q \frac{\eta_q}{2} \mathbb{E}[\norm{\nabla F(y_q)}^2] &\le F(y_1) - \mathbb{E}[F(y_Q)] + \sum_{q=1}^Q \frac{\eta_q^2 L \sigma_\ell^2}{2 n_q} \\
&\le F(y_1) - F^* + \sum_{q=1}^Q \frac{\eta_q^2 L \sigma_\ell^2}{2 n_q}.
\end{align*}
Now, multiplying both sides by $2/A_Q$, we obtain
\begin{align*}
\frac{1}{A_Q} \sum_{q=1}^Q \eta_q \mathbb{E}[\norm{\nabla F(y_q)}^2] &\le \frac{2(F(y_1) - F^*)}{A_Q} + \frac{1}{A_Q} \sum_{q=1}^Q \frac{\eta_q^2 L \sigma_\ell^2}{n_q} \\
&= \frac{2(F(y_1) - F^*)}{A_Q} + \frac{\eta_\ell L \sigma_\ell^2}{B}.
\end{align*}
\end{proof}

\subsubsection{Empirical Evaluation}

\begin{table}
\centering
\caption{Number of client updates (lower is better) to reach validation accuracy on CelebA (90\%) and Sent140 (69\%). We set $M = 1000$ for all methods. We compare \lrn{} against two other popular weighting schemes. {\em Example weight} is when the weight is the number of training examples for each client. {\em Uniform weight} is where all clients have weights of 1. (Units = 1000 updates.)}
\label{tab:lrn}
\begin{tabular}{llrrrlll} 
\toprule
Dataset & K & \lrn{} & Example Weight & Uniform Weight  \\ 
\hline
\Tstrut{}
        &   1                      & 20.8                       & 23.9                             & 20.7 \\
CelebA  &   10                     & 27.1                       & 25.5                             & 28.7 \\       
        &   100                    & 57.6                       & 57.6                             & 54.4  \\ 
\hline
\Tstrut{}
        &    1                      & 190.0                      & 201.9                            & 201.9  \\
Sent140 &   10                      & 124.7                      & 207.9                            & 136.6  \\
        &   100                     & 178.2                      & 570.3                            & 231.7 \\
\bottomrule
\end{tabular}
\end{table}

In Table \ref{tab:lrn}, we compare \lrn{} against two other weighting schemes: {\em Example Weight } where the weight is the number of training examples for each client, and {\em Uniform Weight} where all clients have weight of 1. We see that \lrn{} performs competitively on CelebA. For CelebA, all weighting schemes,  Uniform, Example, and \lrn{} perform similarly. This is because all clients in CelebA have one batch of data and number of examples per client is fairly centered around the mean. On the other hand, \lrn{} significantly outperforms Example Weight and Uniform Weight on Sent140. \lrn{} is beneficial when there is a high degree of data imbalance across clients, as in Sent140. Sent140 is more representative of real world FL applications where there is a long tail in the number of examples and number of batches per client. 

\subsection{Learning Curves}
\label{sec:learning_curves}
In this section, we show the learning curves for each algorithm in Figures \ref{fig:sent140_accuracy}, \ref{fig:celeba_accuracy}, and \ref{fig:cifar_accuracy}. These figures demonstrate \algname{}'s robustness to different staleness distributions. Synchronous FL algorithms, FedAvgM, FedAvg and FedProx, are unaffected by the change in staleness distribution because they simply wait for all clients in the round. 

For both CelebA and Sent140, \algname{} with $K=10$ can reach the target validation accuracy quicker than other values of $K$. At $K=10$, \algname{} appears to have the optimal balance between speed and variance reduction. 

\clearpage
\begin{figure*}[t]
     \centering
     \begin{minipage}[t]{0.32\linewidth}
         \centering
         \includegraphics[width=\linewidth]{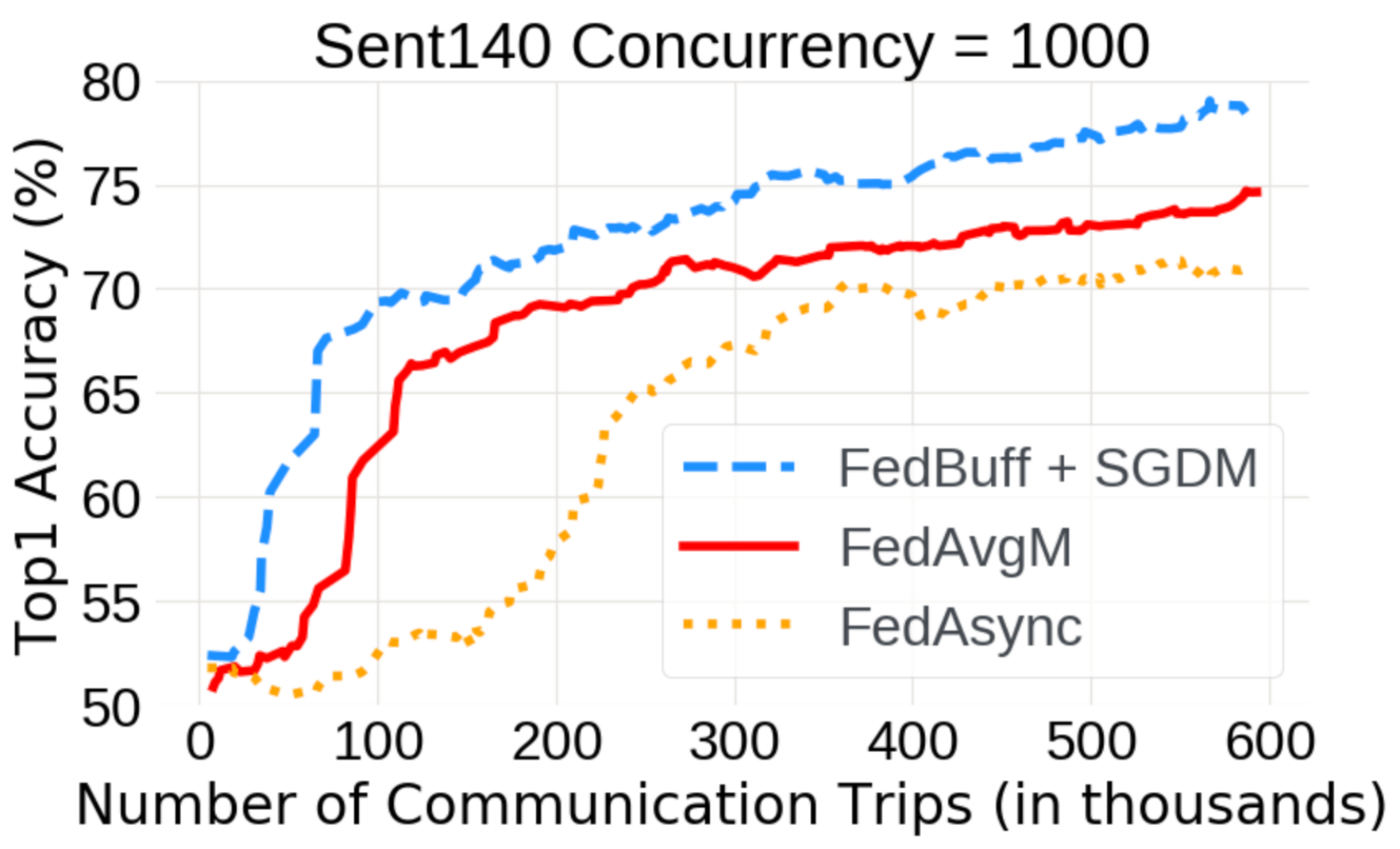}
     \end{minipage}
     \begin{minipage}[t]{0.32\linewidth}
        \centering
        \includegraphics[width=\linewidth]{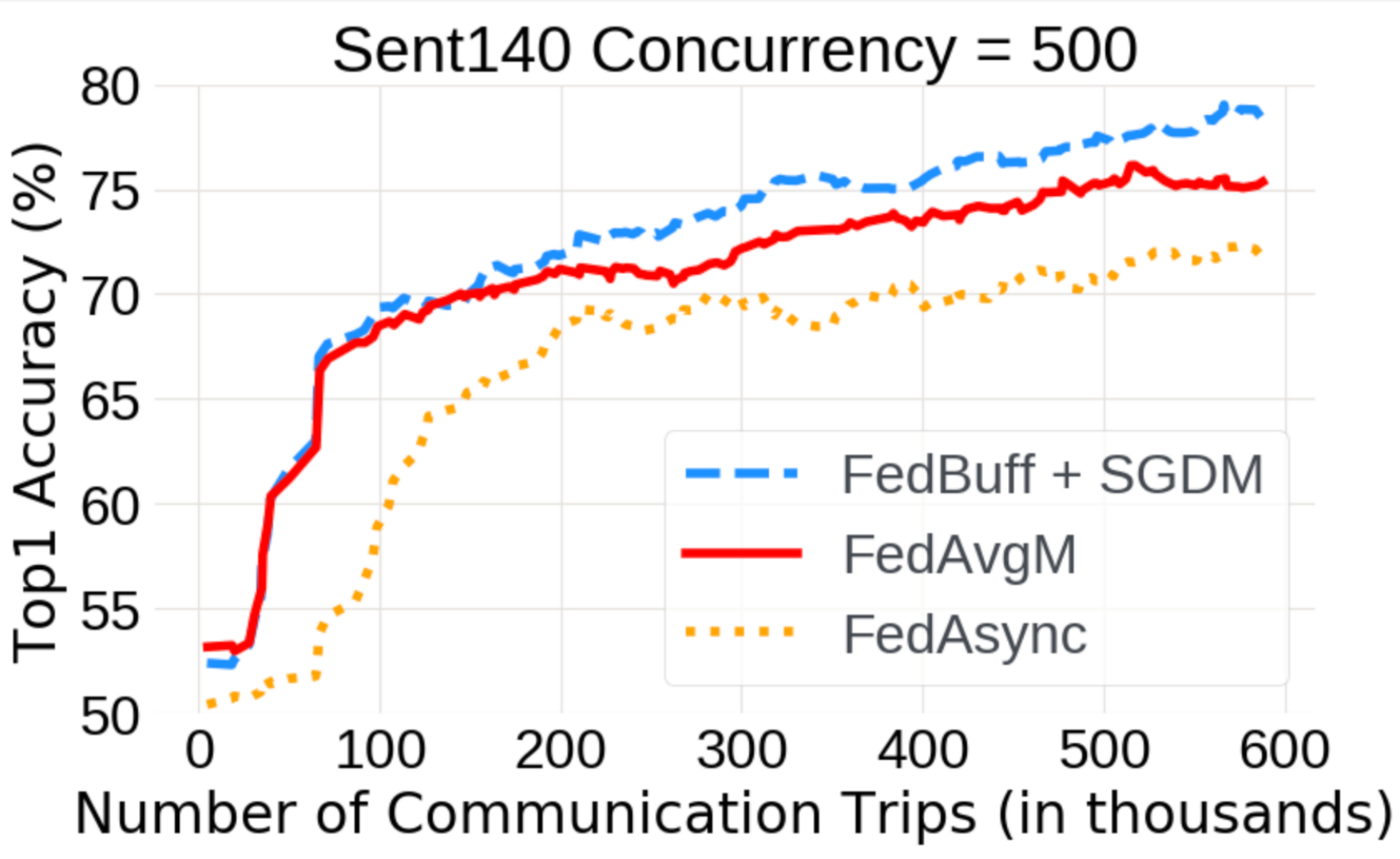}
     \end{minipage}
        \begin{minipage}[t]{0.32\linewidth}
        \centering
        \includegraphics[width=\linewidth]{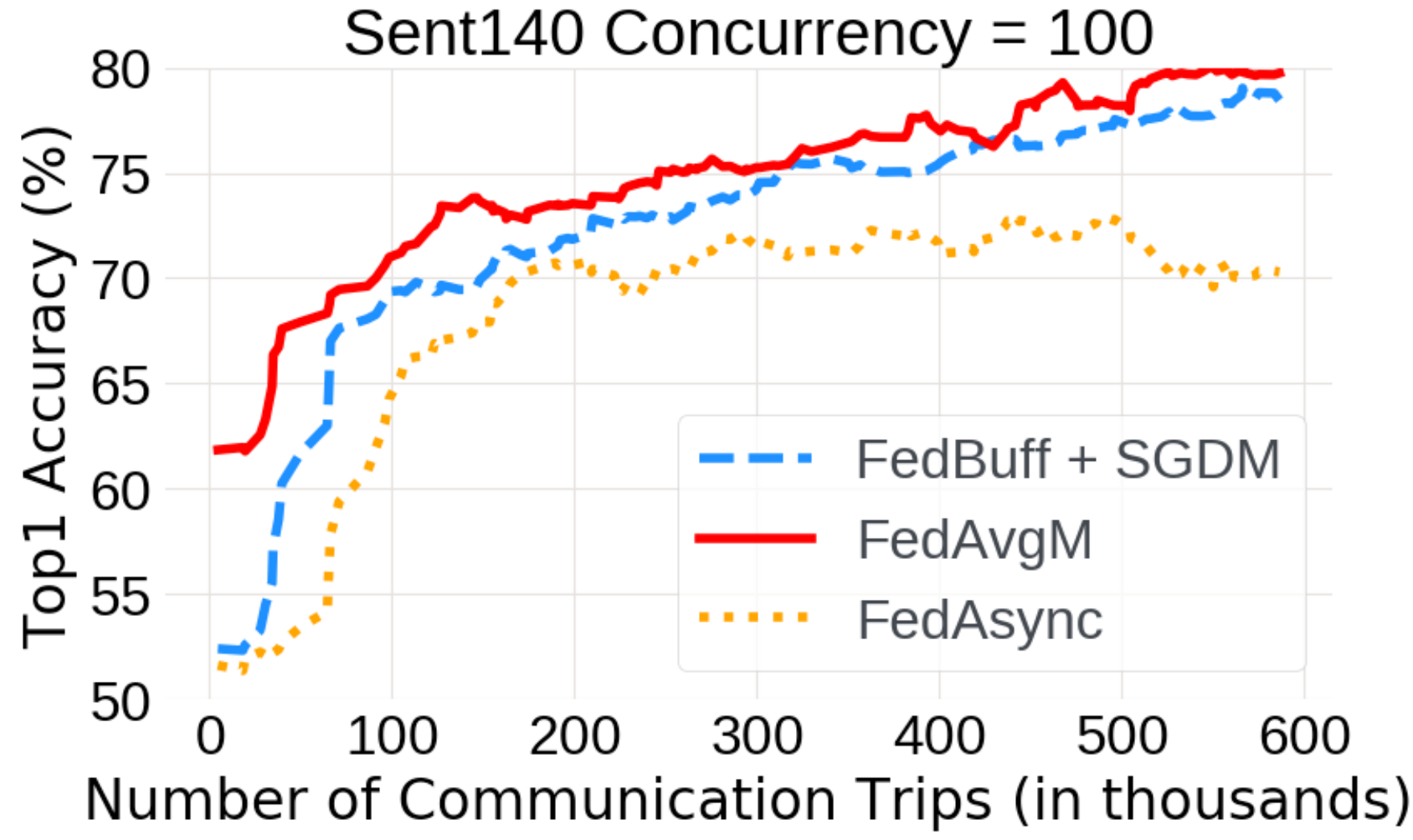}
     \end{minipage}
     \caption{Training accuracy for FedAsync, FedAvgM and \algname on Sent140.}
     \label{fig:sent140_accuracy}
\end{figure*}

\begin{figure*}[t]
     \centering
     \begin{minipage}[t]{0.32\linewidth}
         \centering
         \includegraphics[width=\linewidth]{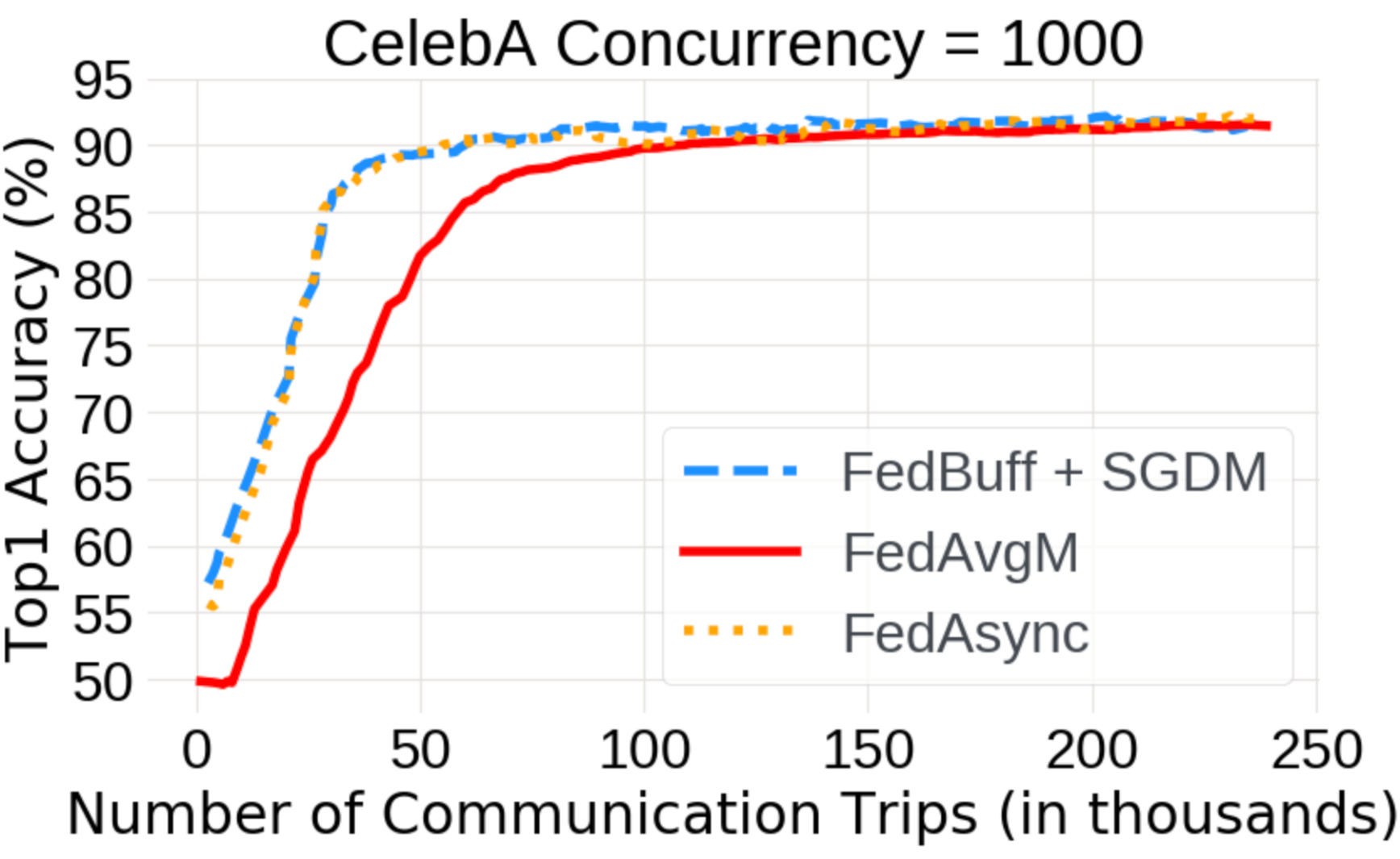}
     \end{minipage}
     \begin{minipage}[t]{0.32\linewidth}
        \centering
        \includegraphics[width=\linewidth]{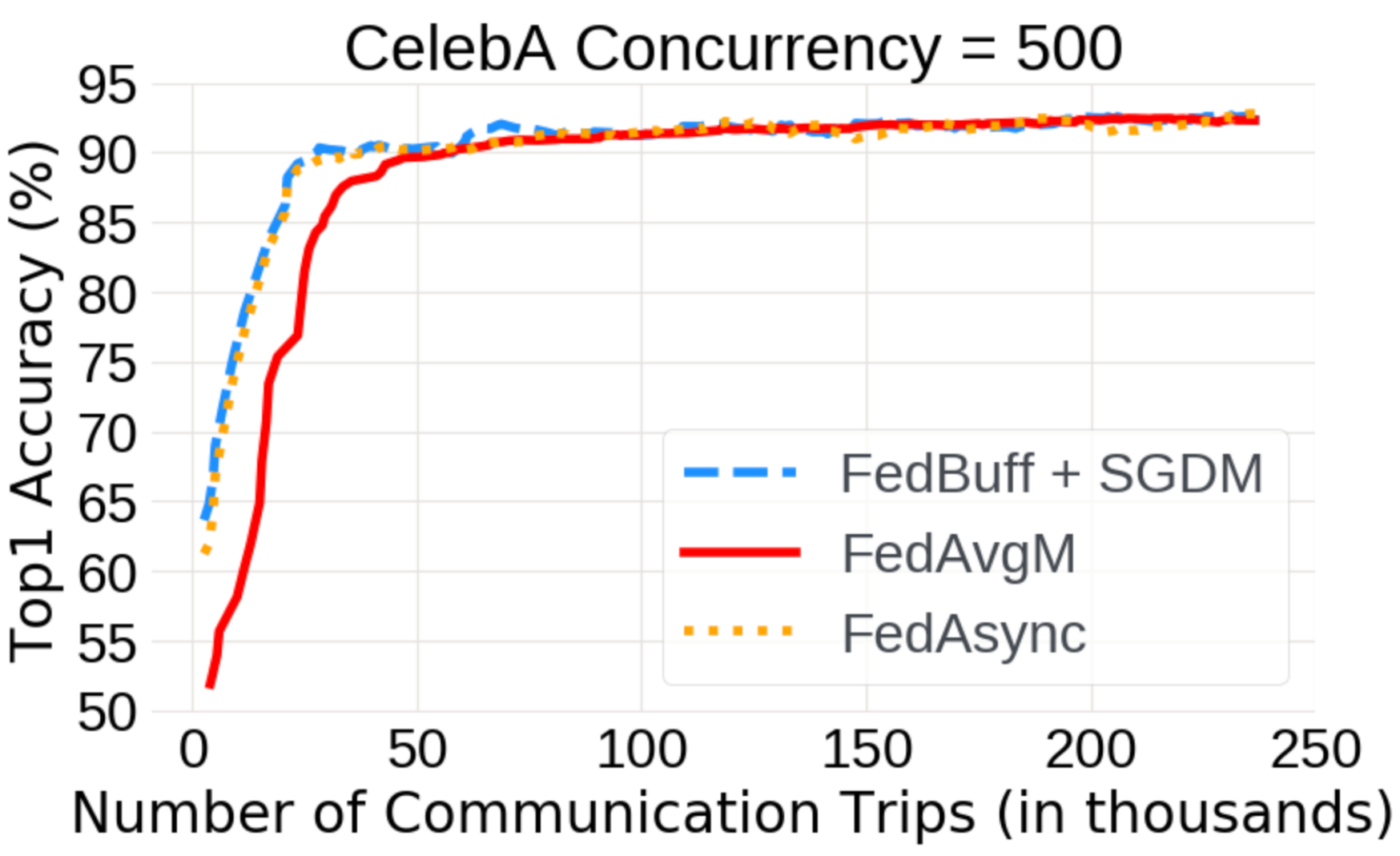}
     \end{minipage}
        \begin{minipage}[t]{0.32\linewidth}
        \centering
        \includegraphics[width=\linewidth]{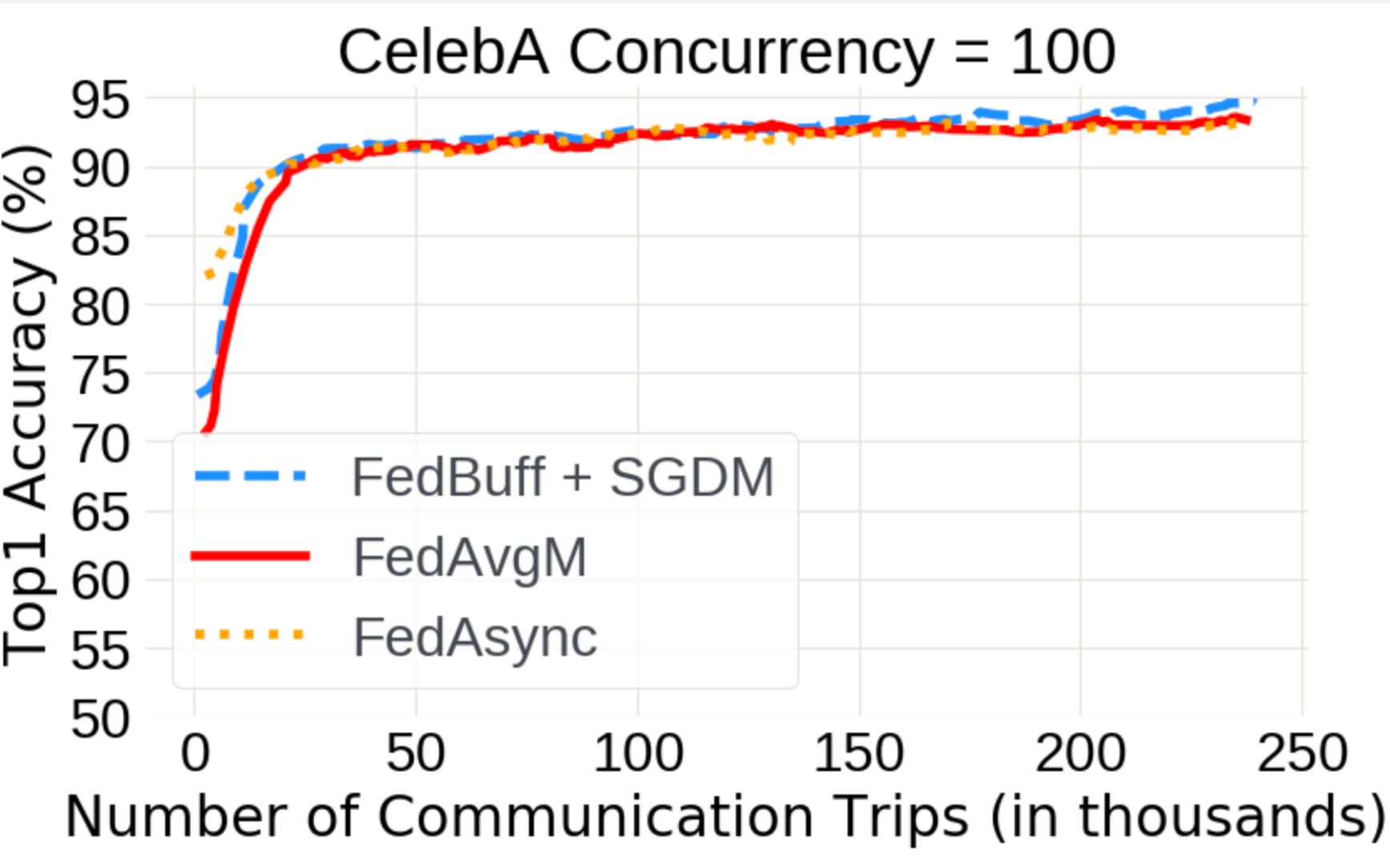}
     \end{minipage}
     \caption{Training accuracy for FedAsync, FedAvgM and \algname on CelebA.}
     \label{fig:celeba_accuracy}
\end{figure*}

\begin{figure*}[t]
     \centering
     \begin{minipage}[t]{0.32\linewidth}
         \centering
         \includegraphics[width=\linewidth]{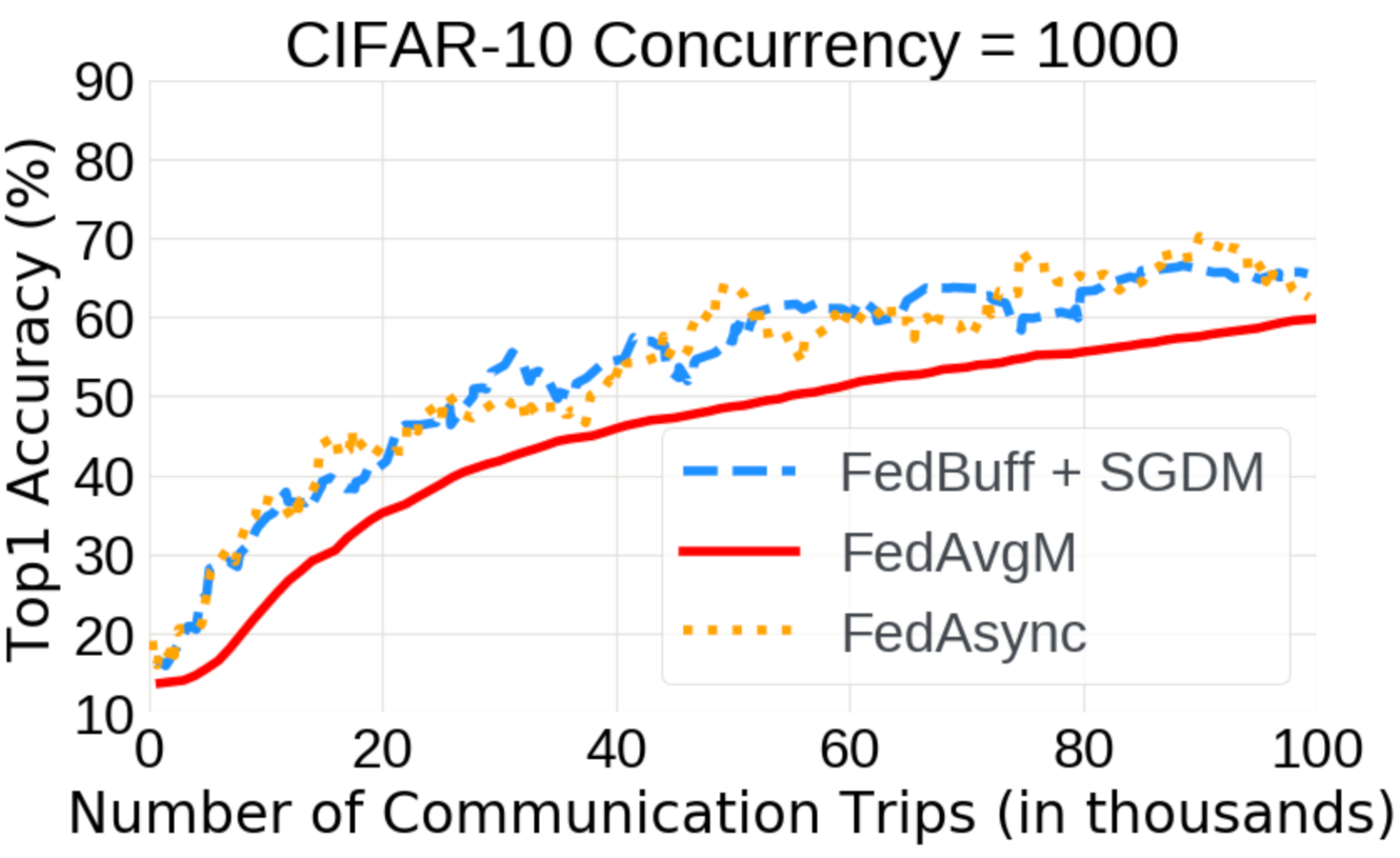}
     \end{minipage}
     \begin{minipage}[t]{0.32\linewidth}
        \centering
        \includegraphics[width=\linewidth]{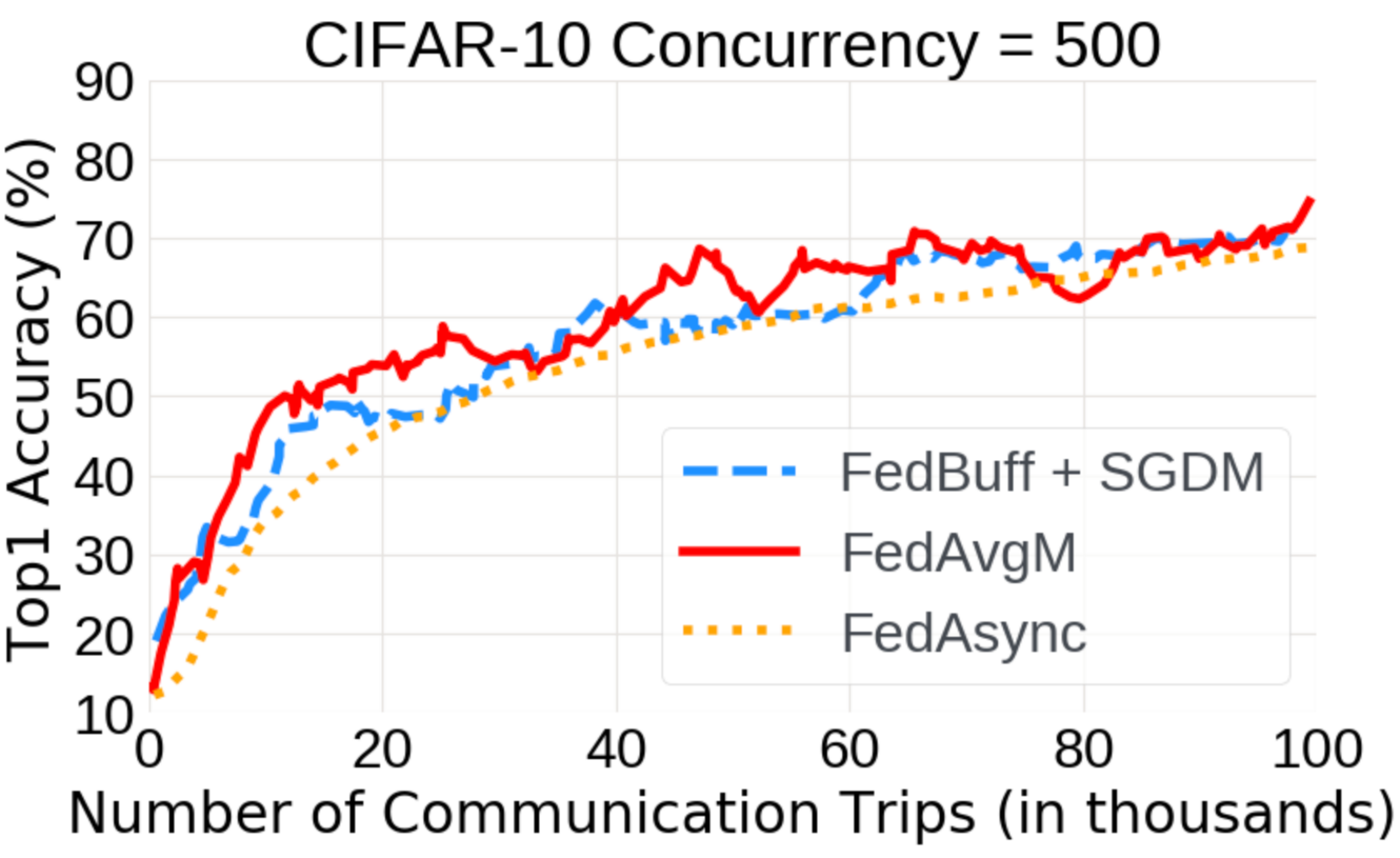}
     \end{minipage}
        \begin{minipage}[t]{0.32\linewidth}
        \centering
        \includegraphics[width=\linewidth]{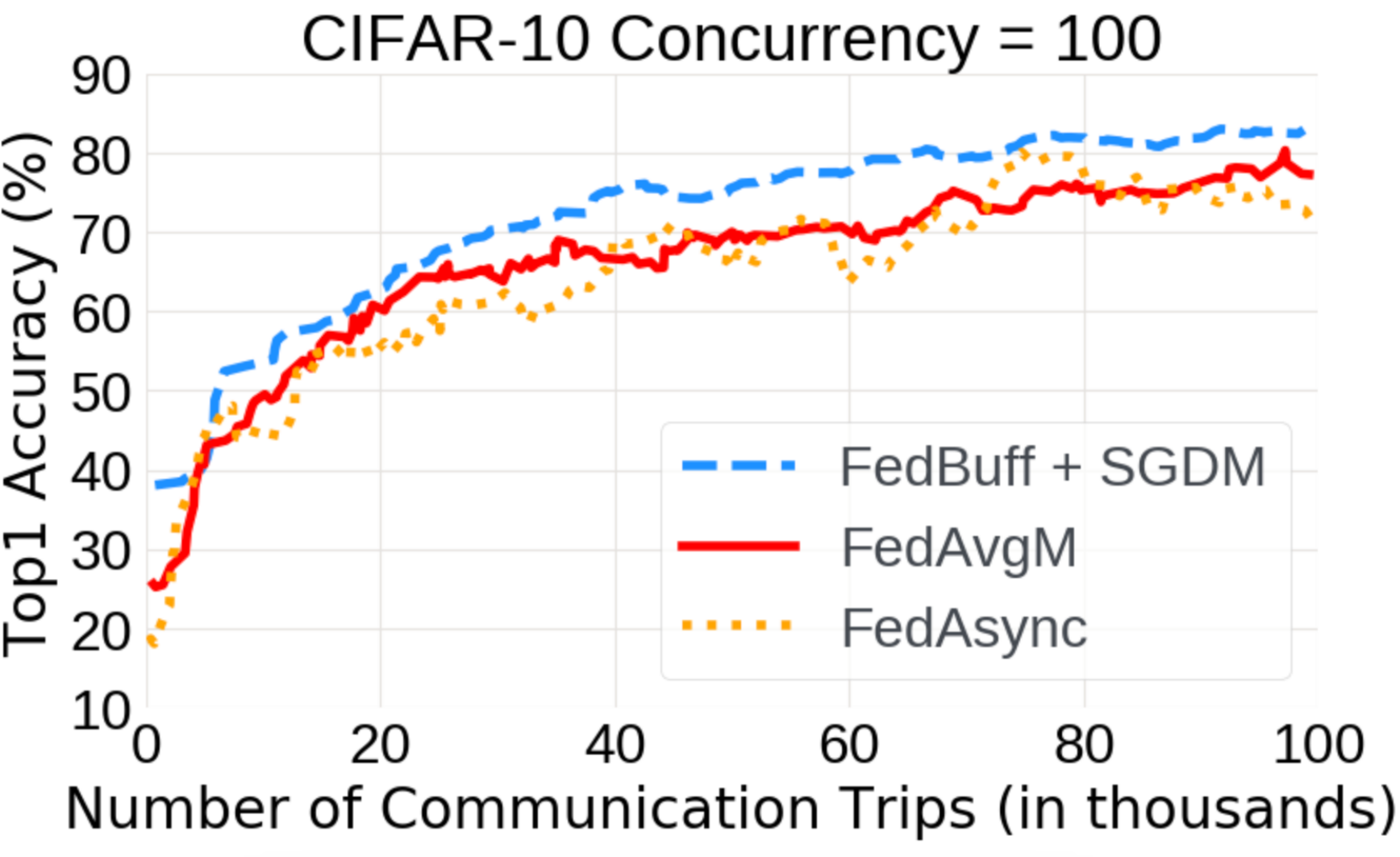}
     \end{minipage}
     \caption{Training accuracy for FedAsync, FedAvgM and \algname on CIFAR-10.}
     \label{fig:cifar_accuracy}
\end{figure*}

\clearpage
\section{Proof of Convergence Rate}
\label{appendix:proof}

\begin{table}[t]
    \caption{Summary of notation}
    \label{tab:notation}
\begin{center}
\begin{tabular}{r c} \toprule
Description & Symbol \\ \midrule
number of server updates, server update index &  $T$, $t$ \\
set of clients updates used in server update $t$ & $\calS^t$ \\
number of clients, client index & $m$, $i$ or $k$ \\
number of local steps per round, round index & $Q$, $q$ \\
server model after $t$ steps & $w^t$ \\
stochastic gradient at client $i$ & $g_i(w; \zeta_i):=g_i(w)$ \\
local learning rate &  $\eta_l$ \\
global learning rate  & $\eta_g$ \\
number of clients in update & $K$ \\
local and global gradient variance & $\sigma_{\ell}^2$, $\sigma_g^2$\\
delay/staleness of client $i$'s model update for the $t$th server update & $\tau_i(t)$ \\
maximum staleness for buffer of size $K$ & $\tau_{\max, K}$ \\
\bottomrule
\end{tabular}
\end{center}
\end{table}

In this appendix, we prove the main convergence result for \algname{}. A summary of the notation used is provided in Table~\ref{tab:notation}.

Observe that \algname{} updates can be described succinctly as
\begin{align*}
w^{t+1} &= w^t + \eta_g \overline{\Delta}^t \\
&= w^t + \eta_g \frac{1}{K} \sum_{k \in \calS^t} \left(- \eta_\ell \sum_{q=1}^Q g_k(y_{k,q}^{t - \tau_k(t)}) \right),
\end{align*}
where $\calS^t$ denotes the set of clients that contribute to the $t$'th server update, and $\tau_k(t) \ge 1$ is the staleness of an update contributed by client $k$ to the $t$'th server update. Specifically, when $k \in \calS^t$, the update returned by client $k$ was computed by starting from $w^{t-\tau_k(t)}$ and performing $Q$ local gradient steps. When $\tau_k(t) = 1$ there is no staleness in the update, and more generally $\tau_k(t) > 1$ corresponds to some staleness; i.e., $t - \tau_k(t)$ server updates have taken place between when the client last pulled a model from the server and when the client's update is being incorporated at the server.

In addition to the assumptions stated in Section~\ref{sec:convergence}, in the proof below we assume that $\calS^t$ is a uniform subset $[n]$; i.e., in any given round any client is equally likely to contribute. This can be justified in practice as follows. To avoid having any client contribute more than once to any update, after the client returns an update contributing to $\overline{\Delta}^t$, the server can only sample that client after the server has performed another update. 

We first state a useful lemma.
\begin{lemma}
$\mathbb{E}\Big[\norm{g_k}^2\Big] \leq 3(\sigma^2_{\ell} + \sigma^2_g + G)$, where the total expectation $\mathbb{E}[\cdot]$ is evaluated over the randomness with respect to client participation and the stochastic gradient taken by a client. 
\label{ref:lemma_totalexpectation}
\end{lemma}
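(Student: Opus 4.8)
The plan is to bound $\mathbb{E}[\|g_k\|^2]$ where $g_k = g_k(y_{k,q}^{t-\tau_k(t)}; \zeta_k)$ is the stochastic gradient taken by a randomly participating client $k$ at some local iterate, and the total expectation is over both the choice of client and the stochastic gradient noise. The natural decomposition is the standard bias-variance split applied twice: first peel off the stochastic gradient noise relative to the true local gradient, then peel off the client heterogeneity relative to the global gradient, and finally bound the global gradient itself.

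Concretely, I would first write, using Assumption~\ref{assumption:unbiased} and the fact that $\mathbb{E}\|X\|^2 = \mathbb{E}\|X - \mathbb{E}X\|^2 + \|\mathbb{E}X\|^2$ conditioned on the client and the iterate,
\[
\mathbb{E}\big[\|g_k\|^2\big] \le \sigma_\ell^2 + \mathbb{E}\big[\|\nabla F_k(y)\|^2\big],
\]
invoking the bounded local variance half of Assumption~\ref{assumption:bounded_var}. Then I would not even need the second variance split in its sharp form; since Assumption~\ref{assumption:bounded_grad} already gives $\|\nabla F_k\|^2 \le G$ uniformly over clients and points, one route is simply $\mathbb{E}[\|g_k\|^2] \le \sigma_\ell^2 + G$, which is stronger than claimed. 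However, to match the exact constant $3(\sigma_\ell^2 + \sigma_g^2 + G)$ in the statement (and presumably to mirror how the bound is actually invoked later in the proof, where all three terms appear together with a factor $3$), I would instead insert the global variance term explicitly: write $\nabla F_k(y) = \big(\nabla F_k(y) - \nabla f(y)\big) + \nabla f(y)$, apply $\|a+b\|^2 \le 2\|a\|^2 + 2\|b\|^2$ (or keep a looser triangle-type bound), take expectation over the uniformly random client to turn $\mathbb{E}_k\|\nabla F_k(y) - \nabla f(y)\|^2$ into $\le \sigma_g^2$ via Assumption~\ref{assumption:bounded_var}, and bound $\|\nabla f(y)\|^2 = \|\frac{1}{m}\sum_i \nabla F_i(y)\|^2 \le \frac{1}{m}\sum_i \|\nabla F_i(y)\|^2 \le G$ by Jensen and Assumption~\ref{assumption:bounded_grad}. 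Collecting terms with the crude constant $3$ (i.e.\ bounding each of the three contributions $\sigma_\ell^2$, $\sigma_g^2$, $G$ by $3$ times itself is wasteful, so more precisely using $\|a+b+c\|^2$-type splitting or simply absorbing the $2$'s into a $3$) yields the claimed inequality.

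I do not anticipate a genuine obstacle here — this is a routine ``everything is bounded'' calculation — but the one point requiring care is the bookkeeping of which expectation is taken when. The stochastic gradient is evaluated at a \emph{local} iterate $y_{k,q}^{t-\tau_k(t)}$ that is itself random (it depends on earlier stochastic gradients), so one must condition on that iterate before applying Assumption~\ref{assumption:unbiased}/\ref{assumption:bounded_var}, and then use the tower property; the bounded-gradient and bounded-global-variance assumptions hold pointwise, so they survive the outer expectation without trouble. The only real modeling input is that the participating client $k$ (equivalently $\calS^t$) is a uniform draw, which is exactly the assumption stated just before the lemma and is what makes $\mathbb{E}_k\|\nabla F_k - \nabla f\|^2 \le \sigma_g^2$ legitimate. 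I would present the three-line chain of inequalities, flag the conditioning explicitly, and note that the constant $3$ is a convenient (non-tight) consolidation.
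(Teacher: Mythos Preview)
Your proposal is correct and essentially matches the paper's approach. The paper does the three-term split in one shot---writing $g_k = (g_k - \nabla F_k) + (\nabla F_k - \nabla f) + \nabla f$ and applying $\|a+b+c\|^2 \le 3(\|a\|^2+\|b\|^2+\|c\|^2)$ directly---whereas you peel off the stochastic-noise term first via the exact variance identity and then split the remainder, but this is a presentational difference only; your observation that $\sigma_\ell^2 + G$ already suffices (and that the factor $3$ is slack) is a valid sharpening the paper does not mention.
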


\begin{proof}
From the law of total expectation we have $\mathbb{E} = 
\mathbb{E}_{k\sim[m]}  \mathbb{E}_{\zeta_k | k}$. Hence,
\begin{equation}
    \begin{aligned}
            \mathbb{E}\Big[\norm{g_k(w)}^2\Big] &= \mathbb{E}_{k\sim[m]}  \mathbb{E}_{g | k}\Big[\norm{g_k(w) - \nabla F_k(w) + \nabla F_k(w) - \nabla f(w) + \nabla f(w)}^2\Big] \\
            &\leq 3 \mathbb{E}_{k\sim[m]} \mathbb{E}_{g | k} \Big[\norm{g_k(w) - \nabla F_k(w)}^2 + \norm{\nabla F_k(w) - \nabla f(w)}^2 + \norm{\nabla f(w)}^2 \Big] \\
            &= 3(\sigma^2_{\ell} + \sigma^2_g + G )
    \end{aligned}
\end{equation}
\end{proof}

\subsection{Proof of Theorem \ref{thm:main_ergodic_rate}}
\begin{theorem}
Let $\eta^{(q)}_{\ell}$ be the local learning rate of client SGD in the $q$-th step, and define $\alpha(Q):=\sum_{q=0}^{Q-1} \eta^{(q)}_{\ell}$, $\beta(Q):=\sum_{q=0}^{Q-1} (\eta^{(q)}_{\ell})^2$. Choosing $\eta_g \eta^{(q)}_{\ell} Q \leq \frac{1}{L}$ for all local steps $q=0,\cdots,Q-1$, the global model iterates in Algorithm \ref{alg:server} achieves the following ergodic convergence rate
\begin{equation}
        \frac{1}{T} \sum_{t=0}^{T-1} \norm{\nabla f(w^t)}^2
        \leq \frac{2 \Big(f(w^0) - f(w^*) \Big)}{\eta_g \alpha(Q) T} + 3 L^2 Q  \beta(Q)\Big(\eta^2_g \tau_{\max, K}^2 + 1 \Big)  \Big(\sigma^2_{\ell} + \sigma^2_g +  G\Big)
        +\frac{L}{2}\frac{\eta_g \beta(Q) }{ \alpha(Q)}   \sigma^2_{\ell} .
\end{equation}
\end{theorem}

\begin{proof}
By $L$-smoothness assumption,
\begin{equation}
    \begin{aligned}
        f(w^{t+1}) &\leq f(w^t) - \eta_g \langle \nabla f(w^t), \overline{\Delta}^t \rangle + \frac{L\eta^2_g}{2} \norm{\overline{\Delta}^t}^2 \\
       &\leq   f(w^t) \underbrace{-\frac{\eta_g}{K}\sum_{k\in \calS_t} \Big\langle \nabla f(w^t),  \Delta_{k}^{t-\tau_k} \Big\rangle}_{T_1} + \underbrace{\frac{L\eta^2_g}{2K^2} \norm{\sum_{k\in \calS_t} \Delta_{k}^{t-\tau_k}}^2}_{T_2},
    \end{aligned}
    \label{eq:lipchitz_expansion}
\end{equation}
where $\Delta_{k}^{t-\tau_k}$ is the client delta which is trained from using the global model after $t-\tau_k$ updates as initialization. We will next derive the upper bounds on $T_1$ and $T_2$. To begin,
\begin{equation}
    \begin{aligned}
        T_1 
        &= -\frac{\eta_g}{K} \sum_{k\in \calS_t}  \Big\langle \nabla f(w^t),  \sum_{q=0}^{Q-1} \eta^{(q)}_{\ell} g_k(y^{t-\tau_k}_{k,q}) \Big\rangle
        = - \frac{\eta_g}{K} \sum_{k\in \calS_t}  \sum_{q=0}^{Q-1} \eta^{(q)}_{\ell} \Big\langle \nabla f(w^t),   g_k(y^{t-\tau_k}_{k,q}) \Big\rangle.
    \end{aligned}
\end{equation}
Using conditional expectation, the expectation operator can be written as 
\begin{equation*}
    \mathbb{E}[\cdot]:= \mathbb{E}_{\mathcal{H}} \mathbb{E}_{i \sim [m]} \mathbb{E}_{g_i | i, \mathcal{H}}[\cdot]
\end{equation*}
where $\mathbb{E}_{\mathcal{H}}$ is the expectation over the history of the iterates, $\mathbb{E}_{i \sim [m]}$ is evaluated over the randomness over the distribution of clients $i \sim [m]$ checking in at time-step $t$, and the inner expectation operates over the stochastic gradient of one step on a client. Hence, following unbiasedness, 
\begin{equation*}
    \begin{aligned}
            \mathbb{E}[T_1] =&  - \mathbb{E} \left[  \frac{\eta_g}{K}  \sum_{k\in \calS_t} \sum_{q=0}^{Q-1} \eta^{(q)}_{\ell}  \Big\langle \nabla f(w^t),   g_k(y^{t-\tau_k}_{k,q}) \Big\rangle \right] \\
            =& - \eta_g \mathbb{E}_{\mathcal{H}} \left[ \frac{1}{m}   \sum_{i=1}^m \sum_{q=0}^{Q-1} \eta^{(q)}_{\ell}  \mathbb{E}_{g_i | i\sim[m]}  \Big\langle \nabla f(w^t),    g_i(y^{t-\tau_i}_{i,q})  \Big\rangle \right] \\
            =& - \frac{\eta_g}{m} \mathbb{E}_{\mathcal{H}} \left[  \sum_{i=1}^m \sum_{q=0}^{Q-1} \eta^{(q)}_{\ell} \Big\langle \nabla f(w^t),   \nabla F_i(y^{t-\tau_i}_{i,q})  \Big\rangle \right] \\
            =& - \eta_g \mathbb{E}_{\mathcal{H}} \left[ \sum_{q=0}^{Q-1} \eta^{(q)}_{\ell}  \Big\langle \nabla f(w^t),   \frac{1}{m}\sum_{i=1}^m \nabla F_i(y^{t-\tau_i}_{i,q}) \Big\rangle \right].
    \end{aligned}
\end{equation*}

From the identity 
$$\langle a, b \rangle = \frac{1}{2}(\norm{a}^2 + \norm{b}^2 - \norm{a-b}^2),$$
we have
\begin{equation}
    \begin{aligned}
        \mathbb{E}[T_1] 
        &= -\frac{ \eta_g}{2}\left(\sum_{q=0}^{Q-1} \eta^{(q)}_{\ell} \right) \norm{\nabla f(w^t)}^2 + \sum_{q=0}^{Q-1} \frac{ \eta_g \eta^{(q)}_{\ell}}{2} \Big(-\mathbb{E}_{\mathcal{H}} \norm{\frac{1}{m}\sum_{i=1}^m \nabla F_i(y^{t-\tau_i}_{i,q})}^2 \\ &+\mathbb{E}_{\mathcal{H}}  \underbrace{\norm{\nabla f(w^t) -  \frac{1}{m}\sum_{i=1}^m \nabla F_i(y^{t-\tau_i}_{i,q})}^2 }_{T_3} \Big).
    \end{aligned}
    \label{eq:T1}
\end{equation}
Now for $T_3$, from the definition of $f(w^t)$,
\begin{equation}
    \begin{aligned}
        \mathbb{E}_{\mathcal{H}}[T_3] &= \mathbb{E}_{\mathcal{H}}\norm{ \frac{1}{m}\sum_{i=1}^m  \nabla F_i(w^t) -  \frac{1}{m}\sum_{i=1}^m \nabla F_i(y^{t-\tau_i}_{i,q})}^2 \\
        &\leq \frac{1}{m} \sum_{i=1}^m \mathbb{E}_{\mathcal{H}}\norm{\nabla F_i(w^t) - \nabla F_i(y^{t-\tau_i}_{i,q})}^2.
    \end{aligned}
\end{equation}
Further, by telescoping, $T_3$ can be decomposed as
\begin{equation}
    \begin{aligned}
        \mathbb{E}[T_3] &= \frac{1}{m}\sum_{i=1}^m \mathbb{E}_{\mathcal{H}} \Big|\Big|\nabla F_i(w^t) - \nabla F_i(w^{t-\tau_i}) + \nabla F_i(w^{t-\tau_i}) - \nabla F_i(y^{t-\tau_i}_{i,q}) \Big|\Big|^2\\
        &\leq \frac{2}{m}\sum_{i=1}^m \mathbb{E}_{\mathcal{H}} \Big( \underbrace{\Big|\Big|\nabla F_i(w^t) - \nabla F_i(w^{t-\tau_i})\Big|\Big|^2}_{\text{staleness}} + \underbrace{\Big|\Big|\nabla F_i(w^{t-\tau_i}) - \nabla F_i(y^{t-\tau_i}_{i,q})\Big|\Big|^2}_{\text{local drift}} \Big) \\
        &\leq \frac{2}{m}\sum_{i=1}^m \Big( L^2 \mathbb{E}_{\mathcal{H}}\norm{w^t - w^{t-\tau_i}}^2 + L^2\mathbb{E}_{\mathcal{H}}\norm{w^{t-\tau_i} - y^{t-\tau_i}_{i,q}}^2 \Big).
    \end{aligned}
\end{equation}

The upper bound on $T_3$ can be understood as sums of bounds on the effect of staleness and local drift during client training, and local variance induced by client-side SGD. Further, we need to produce an upper bound on the staleness of initial model from which the client models are trained.
\begin{equation}
    \begin{aligned}
     \norm{w^t - w^{t-\tau_i}}^2 &= \norm{\sum^{t-1}_{\rho=t-\tau_i} (w^{\rho+1} - w_{\rho}) }^2
= \norm{\sum^{t-1}_{\rho=t-\tau_i} \frac{\eta_g}{K} \sum_{j_\rho \in \calS_{\rho}} \Delta_{j_\rho}^{\rho} }^2 \\
&=  \frac{\eta^2_g}{K^2}\norm{\sum^{t-1}_{\rho=t-\tau_i} \sum_{j_\rho \in \calS_{\rho}} \sum_{l=0}^{Q-1} \eta^{(l)}_{\ell}  g_{j_\rho}(y^{\rho}_{j_\rho, l}) }^2.
\end{aligned}
\label{eq:stalenss_UB}
\end{equation}
Taking the expectation in terms of $\mathcal{H}$,
\begin{equation}
    \begin{aligned}
     \mathbb{E}_{\mathcal{H}}\norm{w^t - w^{t-\tau_i}}^2 &\leq \frac{\eta^2_g Q  \tau_i}{K} \sum^{t-1}_{\rho=t-\tau_i} \sum_{j_\rho \in \calS_{\rho}} \sum_{l=0}^{Q-1} (\eta^{(l)}_{\ell} )^2 \mathbb{E}\norm{ g_{j_\rho}(y^{\rho}_{j_\rho, l}) }^2 \\
     &\leq 3 \eta^2_g Q  \max_{\tau_i} \tau_i^2  \Big(\sum_{l=0}^{Q-1} (\eta^{(l)}_{\ell})^2\Big) \Big(\sigma^2_{\ell} + \sigma^2_g + G\Big) \\
&\leq 3 \eta^2_g Q \tau_{\max, K}^2 \Big(\sum_{l=0}^{Q-1} (\eta^{(l)}_{\ell})^2\Big) \Big(\sigma^2_{\ell} + \sigma^2_g  + G\Big),
\end{aligned}
\end{equation}
where the last inequality follows from the assumption on maximal delay and applying Lemma \ref{ref:lemma_totalexpectation}. Similarly, the local drift term can be upper-bounded by
\begin{equation}
     \mathbb{E}\norm{w^{t-\tau_i} - y^{t-\tau_i}_{i,q}}^2 = \mathbb{E}\norm{y^{t-\tau_i}_{i,0} - y^{t-\tau_i}_{i,q}}^2
    \leq \mathbb{E} \norm{\sum_{l=0}^{q-1}  \eta^{(l)}_{\ell}g_i(y^{t-\tau_i}_{i,l}) }^2 
    \leq 3 q \left(\sum_{l=0}^{q-1}  (\eta^{(l)}_{\ell})^2 \right)\Big(\sigma^2_{\ell} + \sigma^2_g + G\Big).
\end{equation}
Thus, the upper bound on $T_3$ becomes:
\begin{equation}
    \begin{aligned}
        \mathbb{E}[T_3]
        &\leq 6 \Bigg(L^2 \eta^2_g Q  \tau_{\max, K}^2  \Big(\sum_{i=0}^{Q-1} (\eta^{(i)}_{\ell})^2\Big) \Big(\sigma^2_{\ell} + \sigma^2_g + G\Big) + L^2 q \left(\sum_{i=0}^{q-1}  (\eta^{(i)}_{\ell})^2 \right)\Big(\sigma^2_{\ell} + \sigma^2_g+ G\Big) \Bigg) \\
        &\leq 6  L^2 \Big(\sum_{i=0}^{Q-1} (\eta^{(i)}_{\ell})^2\Big)(\eta^2_g Q \tau_{\max, K}^2 + q )  \Big(\sigma^2_{\ell} + \sigma^2_g + G\Big) \\
        &\leq 6  L^2 Q \Big(\sum_{i=0}^{Q-1} (\eta^{(i)}_{\ell})^2\Big)(\eta^2_g \tau_{\max, K}^2 + 1 )  \Big(\sigma^2_{\ell} + \sigma^2_g + G\Big).
    \end{aligned}
    \label{eq:T3_bound_final}
\end{equation}
Inserting the upper bound on $T_3$ into (\ref{eq:T1}), we have,
\begin{equation}
     \mathbb{E}[T_1] \leq -\frac{\eta_g}{2}\left(\sum_{q=0}^{Q-1} \eta^{(q)}_{\ell} \right) \norm{\nabla f(w^t)}^2 +  \sum_{q=0}^{Q-1} \frac{\eta_g\eta^{(q)}_{\ell}}{2} \mathbb{E}[T_3] -\sum_{q=0}^{Q-1} \frac{ \eta_g \eta^{(q)}_{\ell}}{2} \mathbb{E}_{\mathcal{H}} \norm{\frac{1}{m}\sum_{i=1}^m \nabla F_i(y^{t-\tau_i}_{i,q})}^2.
\end{equation}
Let $\alpha(Q):=\sum_{q=0}^{Q-1} \eta^{(q)}_{\ell}$ and $\beta(Q):=\sum_{q=0}^{Q-1} (\eta^{(q)}_{\ell})^2$. Then
\begin{equation}
    \mathbb{E}[T_1] \leq - \frac{\eta_g \alpha(Q)}{2} \norm{\nabla f(w^t)}^2 + 3 \eta_g L^2 Q \alpha(Q)  \beta(Q)\Big(\eta^2_g  \tau_{\max, K}^2 + 1 \Big)  \Big(\sigma^2_{\ell} + \sigma^2_g + G\Big) 
    \underbrace{-\sum_{q=0}^{Q-1} \frac{ \eta_g \eta^{(q)}_{\ell}}{2} \mathbb{E}_{\mathcal{H}} \norm{\frac{1}{m}\sum_{i=1}^m \nabla F_i(y^{t-\tau_i}_{i,q})}^2}_{T_4}.
\label{eq:T1_final}
\end{equation}

To derive the upperbound on the R.H.S. of (\ref{eq:lipchitz_expansion}), we now need to upper bound $\mathbb{E}[T_2]$. We proceed by adding and subtracting the expected gradient within the norm,
\begin{equation}
    \begin{aligned}
        \mathbb{E}[T_2] 
        &= \mathbb{E} \Bigg[ \frac{L\eta^2_g}{2K^2}  \norm{\sum_{k\in \calS_t} \sum_{q=0}^{Q-1} \eta^{(q)}_{\ell} g_k(y^{t-\tau_k}_{k,q})}^2 \Bigg] \\
        &= \mathbb{E} \Bigg[ \frac{L\eta^2_g}{2K^2}  \norm{\sum_{k\in \calS_t} \sum_{q=0}^{Q-1} \eta^{(q)}_{\ell}\left(g_k(y^{t-\tau_k}_{k,q}) - \nabla F_k(y^{t-\tau_k}_{k,q}) \right) + \sum_{k\in \calS_t} \sum_{q=0}^{Q-1} \eta^{(q)}_{\ell} \nabla F_k(y^{t-\tau_k}_{k,q}) }^2 \Bigg] \\
        &\overset{\mathrm{(A.)}}{=}  \frac{L\eta^2_g}{2K^2} \mathbb{E} \norm{\sum_{k\in \calS_t} \sum_{q=0}^{Q-1}  \eta^{(q)}_{\ell}\left(g_k(y^{t-\tau_k}_{k,q}) - \nabla F_k(y^{t-\tau_k}_{k,q}) \right) }^2 +  \frac{L\eta^2_g}{2K^2} \mathbb{E}   \norm{\sum_{k\in \calS_t} \sum_{q=0}^{Q-1}  \eta^{(q)}_{\ell} \nabla F_k(y^{t-\tau_k}_{k,q}) }^2 \\
        &\overset{\mathrm{(B.)}}{=}  \frac{L\eta^2_g}{2} \sum_{k\in \calS_t} \sum_{q=0}^{Q-1} (\eta^{(q)}_{\ell})^2 \mathbb{E} \norm{\left(g_k(y^{t-\tau_k}_{k,q}) - \nabla F_k(y^{t-\tau_k}_{k,q}) \right) }^2 +  \frac{L\eta^2_g}{2K^2} \mathbb{E}   \norm{\sum_{k\in \calS_t} \sum_{q=0}^{Q-1}  \eta^{(q)}_{\ell} \nabla F_k(y^{t-\tau_k}_{k,q}) }^2 \\
        &\leq  \frac{L\eta_g^2 \beta(Q) \sigma^2_{\ell}}{2} + \frac{LQ\eta_g^2}{2K} \sum_{k\in \calS_t} \sum_{q=0}^{Q-1}  (\eta^{(q)}_{\ell})^2\mathbb{E}_{\mathcal{H}} \mathbb{E}_{k\sim[m] | \mathcal{H}}\norm{ \nabla F_k(y^{t-\tau_k}_{k,q}) }^2 \\
        &=  \frac{L\eta_g^2 \beta(Q) \sigma^2_{\ell}}{2} + \frac{LQ\eta_g^2}{2K} \sum_{k\in \calS_t} \sum_{q=0}^{Q-1}  (\eta^{(q)}_{\ell})^2\mathbb{E}_{\mathcal{H}} \Bigg[ \frac{1}{m}\sum_{i=1}^m\norm{ \nabla F_i(y^{t-\tau_i}_{i,q}) }^2 \Bigg] \\
        &=  \frac{L\eta_g^2 \beta(Q) \sigma^2_{\ell}}{2} + \underbrace{\frac{LQ\eta_g^2}{2m}  \sum_{q=0}^{Q-1}  \sum_{i=1}^m (\eta^{(q)}_{\ell})^2 \mathbb{E}_{\mathcal{H}} \Bigg[ \norm{ \nabla F_i(y^{t-\tau_i}_{i,q}) }^2 \Bigg]}_{T_5} \\        
    \end{aligned}
\label{eq:T2_final}
\end{equation}
where (A.) follows the unbiasedness of $g_k$, and (B.) follows from the fact that $g_k - \nabla F_k$ are independent and unbiased for $k \sim [m]$.
To produce an upperbound on $\mathbb{E}[T_1 + T2]$, we need to make sure $T_4 + T_5 \leq 0$.
\begin{equation}
    \begin{aligned}
        &\big(T_4 + T_5\big) \\ =&  -\sum_{q=0}^{Q-1} \frac{ \eta_g \eta^{(q)}_{\ell}}{2} \mathbb{E}_{\mathcal{H}} \norm{\frac{1}{m}\sum_{i=1}^m \nabla F_i(y^{t-\tau_i}_{i,q})}^2 + \frac{LQ\eta_g^2}{2m}  \sum_{q=0}^{Q-1}  \sum_{i=1}^m (\eta^{(q)}_{\ell})^2 \mathbb{E}_{\mathcal{H}}  \norm{ \nabla F_i(y^{t-\tau_i}_{i,q}) }^2  \\      
        =& -\sum_{q=0}^{Q-1}\sum_{i=1}^m  \frac{\eta_g \eta^{(q)}_{\ell}}{2m} \mathbb{E}_{\mathcal{H}}  \norm{ \nabla F_i(y^{t-\tau_i}_{i,q})}^2 + \frac{LQ\eta_g^2}{2m}  \sum_{q=0}^{Q-1}  \sum_{i=1}^m (\eta^{(q)}_{\ell})^2 \mathbb{E}_{\mathcal{H}}  \norm{ \nabla F_i(y^{t-\tau_i}_{i,q}) }^2  \\
        =& \sum_{q=0}^{Q-1}\sum_{i=1}^m \Big( -\frac{\eta_g \eta^{(q)}_{\ell}}{2m} + \frac{LQ\eta_g^2 (\eta^{(q)}_{\ell})^2}{2m}  \Big) \mathbb{E}_{\mathcal{H}}  \norm{ \nabla F_i(y^{t-\tau_i}_{i,q})}^2 
    \end{aligned}
    \label{eq:local_lr_condition}
\end{equation}
To ensure $T_4 + T_5 \leq 0$, it is sufficient to choose $\eta_g \eta^{(q)}_{\ell} Q \leq \frac{1}{L}$ for all local steps $q=0,\cdots,Q-1$.

Now, plugging (\ref{eq:T1_final}), (\ref{eq:T2_final}) and (\ref{eq:local_lr_condition}) into     (\ref{eq:lipchitz_expansion}),

\begin{equation}
        \mathbb{E}[f(w^{t+1})] \leq  \mathbb{E}[f(w^t)] - \frac{\eta_g \alpha(Q)}{2} \norm{\nabla f(w^t)}^2 + 3\eta_g L^2 Q \alpha(Q)  \beta(Q)\Big(\eta^2_g \tau_{\max, K}^2 + 1 \Big)  \Big(\sigma^2_{\ell} + \sigma^2_g + G\Big) 
        + \frac{L}{2} \eta^2_g \beta(Q)    \sigma^2_{\ell}
    \label{eq:obj_bound}
\end{equation}
Summing up $t$ from $1$ to $T$ and rearrange, yields
\begin{equation}
    \begin{aligned}
        \sum_{t=0}^{T-1} \eta_g  \alpha(Q)\norm{\nabla f(w^t)}^2 &\leq   \sum_{t=0}^{T-1} 2 \Big(\mathbb{E}[f(w^t)]  - \mathbb{E}[f(w^{t+1})] \Big)  + 3\sum_{t=0}^{T-1} \eta_g L^2  Q \alpha(Q)  \beta(Q)\Big(\eta^2_g  \tau_{\max, K}^2 + 1 \Big)  \Big(\sigma^2_{\ell} + \sigma^2_g + G\Big) \\
        &+  \frac{L}{2} \eta^2_g \beta(Q)   \sigma^2_{\ell}\\
        &\leq 2 \Big(f(w^0) - f(w^*) \Big) + 3\sum_{t=0}^{T-1} \eta_g L^2 \alpha(Q)  \beta(Q)\Big(\eta^2_g  \tau_{\max, K}^2 + Q \Big)  \Big(\sigma^2_{\ell} + \sigma^2_g + G\Big) 
        +  \frac{L}{2}\eta^2_g \beta(Q)  \sigma^2_{\ell}\\
    \end{aligned}
\end{equation}
Thus we have
\begin{equation}
        \frac{1}{T} \sum_{t=0}^{T-1} \norm{\nabla f(w^t)}^2
        \leq \frac{2 \Big(f(w^0) - f(w^*) \Big)}{\eta_g \alpha(Q) T} + 3 L^2 Q  \beta(Q)\Big(\eta^2_g \tau_{\max, K}^2 + 1 \Big)  \Big(\sigma^2_{\ell} + \sigma^2_g +  G\Big)       +\frac{L}{2}\frac{\eta_g \beta(Q) }{ \alpha(Q)}   \sigma^2_{\ell} 
\end{equation}
\end{proof}


\end{document}